\documentclass[3p]{elsarticle}
\bibliographystyle{unsrt} 
\usepackage{algorithm}
\usepackage{algorithmic}

\usepackage{xcolor}
% 重定义 algorithmic 的注释命令

%%临时去掉并增加
\usepackage{amsthm}   % 提供 theorem / proof 等环境
\usepackage{hyperref}
% \modulolinenumbers[5]
% \usepackage{hyperref}
% \usepackage[a4paper, margin=0.5in]{geometry}
%%临时去掉并增加

\usepackage{CJK}
%zxr

\newcommand{\vect}[1]{\boldsymbol{#1}} % vector
\newcommand{\mat}[1]{\mathbf{#1}}      % matrix
   % tensor
      % set
            % transposition
\everymath{\displaystyle}              % block style for inline math

\newtheorem{proposition}{Proposition}[section]
\newtheorem{definition}{Definition}
%zxr
\usepackage{times}
\usepackage{array,color}
\usepackage{booktabs}
\usepackage{multirow}
\usepackage{chngpage}
\usepackage{lscape}
\usepackage{amsfonts}
\usepackage{amsmath}
\usepackage{enumitem}
\usepackage{graphicx}
\usepackage{bm}
\usepackage{bbm}
\usepackage[caption=false,font=footnotesize]{subfig}
\usepackage{float}
\newcommand*{\Autoref}[1]{\textup{Section~\ref{#1}}}
\biboptions{numbers,sort&compress}

% \journal{Applied Soft Computing}

\begin{document}

\begin{frontmatter}

\title{Evolutionary Neural Architecture Search with Dual Contrastive Learning
%\tnoteref{mytitlenote}
}
\tnotetext[mytitlenote]{
    This work was supported by the Guangdong Natural Science Funds for Distinguished Young Scholars under Grant 2022B1515020049; by the National Natural Science Foundation of China under Grant 62276100; by the Guangdong Regional Joint Fund for Basic and Applied Research under Grant 2021B1515120078; by the National Research Foundation of Korea under Grant NRF2022H1D3A2A01093478; and by the TCL Young Scholars Program. 
}
\author[aff1]{Xian-Rong Zhang}
\author[aff1]{Yue-Jiao Gong\corref{mycorrespondingauthor}}
\ead{gongyuejiao@gmail.com}
\cortext[mycorrespondingauthor]{Corresponding author}
\author[aff1]{Wei-Neng Chen}
\author[aff2,aff3]{Jun Zhang}

\address[aff1]{School of Computer Science and Engineering, South China University of Technology, Guangzhou, China.}
\address[aff2]{College of Artificial Intelligence, Nankai University, Tianjin, China.}
\address[aff3]{Hanyang University, 15588 Ansan, South Korea.}

\begin{abstract}
Evolutionary Neural Architecture Search (ENAS) has gained attention for automatically designing neural network architectures. Recent studies use a neural predictor to guide the process, but the high computational costs of gathering training data---since each label requires fully training an architecture---make achieving a high-precision predictor with { limited compute budget (i.e., a capped number of fully trained architecture--label pairs)} crucial for ENAS success. This paper introduces ENAS with Dual Contrastive Learning (DCL-ENAS), a novel method that employs two stages of contrastive learning to train the neural predictor. In the first stage, contrastive self-supervised learning is used to learn meaningful representations from neural architectures without requiring labels. In the second stage, fine-tuning with contrastive learning is performed to accurately predict the relative performance of different architectures rather than their absolute performance, which is sufficient to guide the evolutionary search. Across NASBench-101 and NASBench-201, DCL-ENAS achieves the highest validation accuracy, surpassing the strongest published baselines by 0.05\% (ImageNet16-120) to 0.39\% (NASBench-101). On a real-world ECG arrhythmia classification task, DCL-ENAS improves performance by approximately 2.5 percentage points over a manually designed, non-NAS model obtained via random search, while requiring only 7.7 GPU-days.

\end{abstract}

\begin{keyword}
Evolutionary neural architecture search \sep predictor-assisted evolutionary algorithm \sep self-supervised learning \sep contrastive learning
\end{keyword}

\end{frontmatter}

% \linenumbers

\section{Introduction}
For decades, the promising architectures of deep neural network (DNN) have been manually designed by researchers with rich knowledge in neural networks and image processing. However, in practice, most users do not have such knowledge. In addition, DNN architectures are often problem-specific. If the distribution of data changes, the architecture must be redesigned accordingly~\cite{enas_survey}. Neural Architecture Search (NAS) is a technology that can automatically design architectures and is considered a promising method to address these challenges~\cite{nas_survey}. 
\begin{figure}[htbp]
    \centering
    \includegraphics[width=0.95\textwidth]{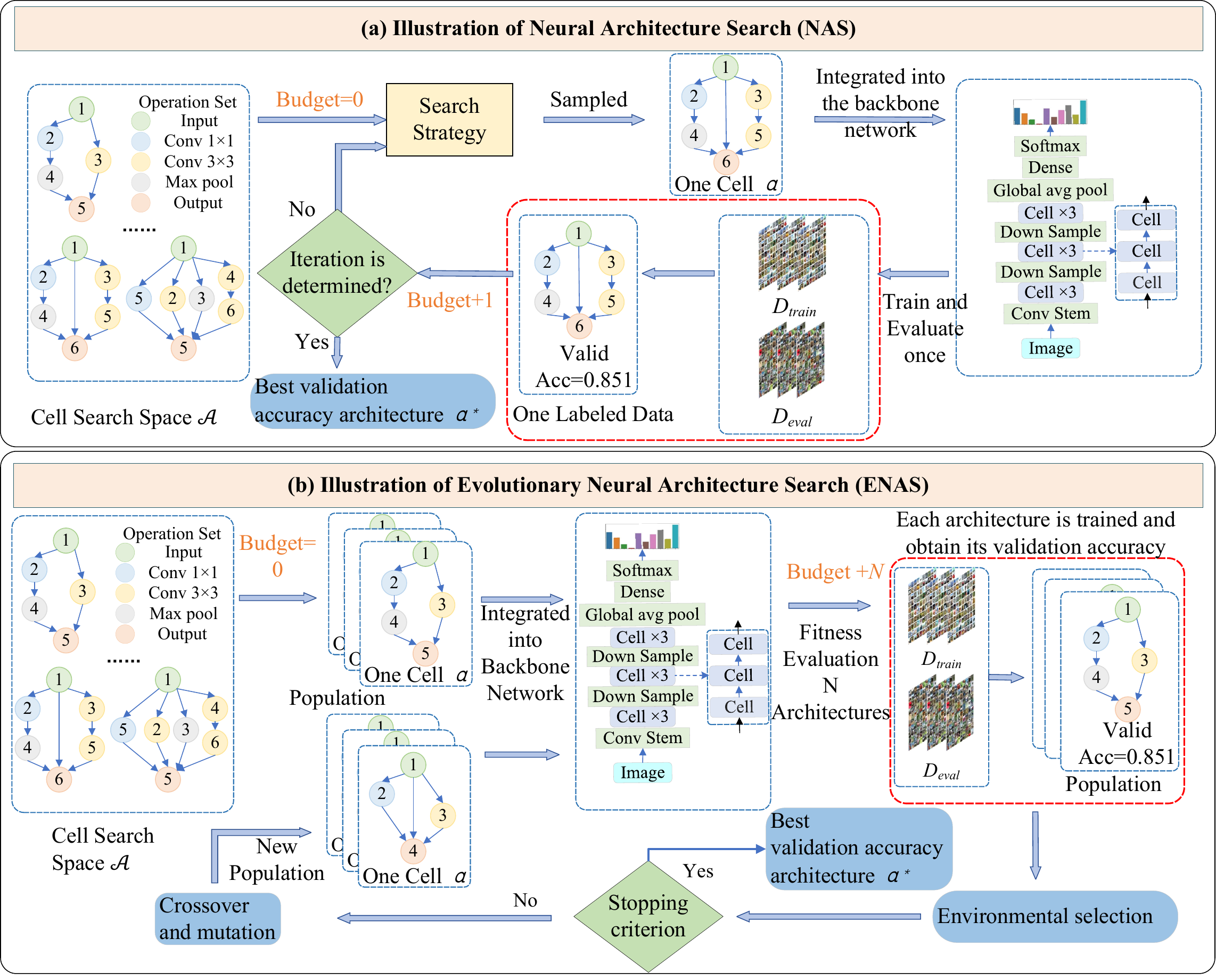}
    \caption{Illustration of NAS and ENAS using image classification as an example.}
    \label{fig:NAS}
\end{figure}

The goal of NAS is to automatically explore the optimal network architecture ${\alpha ^ * }$ within a predefined search space ${\cal A}$, which, when trained, achieves the maximum accuracy on the evaluation set ${D_{eval}}$. This process can be formalized as:
\begin{equation}
\left\{ {\begin{array}{*{20}{c}}
{{\alpha ^ * } = \mathop {\arg \max }\limits_{\alpha  \in {\cal A}} O(\alpha ,{w_{\alpha}^{*} },{D_{eval}})}\\
{\begin{array}{*{20}{c}}
{s.t.}&{{w_{\alpha}^{*} } = \mathop {\arg \max }\limits_{{w_\alpha }} O(\alpha ,{w_\alpha },{D_{train}})}
\end{array}}
\end{array}} \right.
\end{equation}
Here, ${O(\alpha ,{w_\alpha },{D})}$ refers to the objective function measuring the performance of the network architecture $\alpha$ under weights ${w_\alpha }$ on dataset $D$; $w_{\alpha}^{*}$ represents the weights of the optimal network architecture $\alpha ^ *$ achieving the best performance on the training data ${D_{train}}$. 
As illustrated in ~\autoref{fig:NAS}(a), the NAS process divides the DNN into fixed parts and a manually defined cell-based search space~\cite{three_SAENAS,ASOC_similar,NASBench-101} $\mathcal{A}$. A search strategy samples a candidate architecture $\alpha$ from $\mathcal{A}$, which is inserted into the backbone network and evaluated. If the number of evaluations (i.e., the compute budget) reaches the predefined limit, the search returns the architecture with the highest validation accuracy; otherwise, it continues.

NAS methods can be divided into three categories: reinforcement learning (RL)-based, gradients-based, and evolutionary computation (EC)-based.
RL-based NAS uses a controller to sample new architectures by making sequential decisions to maximize a reward based on the performance of the architectures. Works such as ~\cite{zhong2020blockqnn}, have revealed that RL algorithms can discover good architectures, but requiring hundreds of GPU hours. Gradient-based NAS~\cite{ASOC_DARTS_GD} relaxes the search space to be continuous and optimizes the architecture and network weights together through gradient descent. One major issue with gradient-based NAS is the \textit{architecture discretization gap}, which arises from the discrepancy between the mixed architecture parameters optimized during the search phase and the discrete architecture used during the evaluation phase, potentially leading to degraded performance of the discretized network. 

Given that Evolutionary Algorithms (EAs) are good at solving complex non-convex optimization problems and expensive optimization problems, EAs are frequently used to solve NAS problems, known as Evolutionary Neural Architecture Search (ENAS).
As illustrated in ~\autoref{fig:NAS}(b), ENAS encodes each neural architecture as an individual within a population. The fitness, typically measured by validation accuracy, guides the evolutionary process across generations to discover high-performing architectures.
However, evaluating numerous DNNs in each generation incurs substantial computational cost, which scales with both population size and the number of iterations, making ENAS resource-intensive. For instance, LargeEvo requires approximately 2750 GPU-days to complete a single search~\cite{real2017large}.

To alleviate the burden of expensive evaluations, recent work has introduced neural architecture performance predictors that estimate candidate performance without full training. Most of them follow a similar training pipeline: a set of DNN is first sampled and trained to obtain their performance; then, a regression model—typically using mean squared error (MSE) as the objective—is constructed based on the architectural encodings and corresponding performance values; finally, the trained regressor is used to predict the performance of newly generated DNNs. By replacing the costly full training process with a performance predictor, the computational burden of ENAS can be significantly reduced. However, building such predictors still requires a large amount of supervised data obtained via expensive fitness evaluations. For instance, NARQ2T~\cite{guo2023latency} requires thousands of evaluation samples to train its surrogate model effectively. Current predictor-assisted ENAS methods still face the following key challenges:
(1) Collecting sufficient training data for the predictor remains computationally expensive, as each training sample demands full training epochs, which limits practical applicability;
(2) Inaccuracies in the predictor may mislead the evolutionary search, resulting in wasted evaluations and suboptimal architectures.

 In order to achieve higher evaluation accuracy while reducing the assessment cost, we propose predictor-assisted ENAS with Dual Contrastive Learning (DCL-ENAS). This method uses contrastive self-supervised learning to learn information flow knowledge from a large amount of unlabelled structural data and derive meaningful neural architecture representations, thereby reducing the amount of effort for fully training networks with labeled data. Using the pre-trained representations, the predictor with high accuracy can be easily trained during the subsequent stage. Specifically, the predictor model's training is performed in a contrastive learning manner to further boost performance, which justifies the `dual' aspect of our method's name.

 The main contributions of this article can be summarized as follows: 
 \begin{itemize}
     \item[1)] In the contrastive pre-training phase, we introduce a novel prior task to generate similar representations for similar architectures, and vice versa. A hard encoder is designed to produce information flow vector representations that serve to group architectures within the search space. Subsequently, a learnable soft encoder, forming part of the predictor model, is developed with the objective of encoding the grouping knowledge derived from the information flow vectors. Notably, this process is self-supervised, obviating the need for labeled data regarding architecture performance, thereby significantly reducing computational costs. Moreover, to thoroughly explore the feature space, the design of the network deliberately incorporates attention mechanisms between nodes and along the information flow paths.
     \item[2)] In the constrastive full-training phase, the entire predictor model, including the pretrained representation component, is fine-tuned to rank different architectures. Unlike the previous predictors that adopt regression methods to directly predict the performance of architectures using $n$ labeled data points, the contrastive learning approach effectively expands the training dataset to $n(n-1)$ pairs. This expansion effectively leverages the data, resulting in a highly accurate model that provides effective guidance for ENAS and reduces evaluation costs.
     \item[3)] In the evolution process, we define new evolutionary operators based on the information flow in architecture graphs. This enhances the algorithm's search capability by generating more diverse architectures and helps in finding high-quality solutions.
 \end{itemize}

The remainder of this article is organized as follows. 
\Autoref{sec:bg} summarizes background and related work.
\Autoref{sec:algo} details the proposed algorithm.
The experimental comparisons and analyses are provided in \Autoref{sec:experiments}. \Autoref{sec:application_study} describes a real-world ECG arrhythmia-classification case study, which further validates our method.
\Autoref{sec:conclusion} draws the conclusions and outlines future work.

\section{Background and related work}
\label{sec:bg}

\subsection{Evolutionary NAS}
\label{sec:rbfn}

ENAS algorithms generally follow the classical evolutionary computation paradigm, which involves several iterative stages. The process begins by initializing a population of individuals, where each individual encodes a distinct deep neural network architecture sampled from a manually defined search space. Subsequently, each candidate architecture is allocated substantial computational resources for training and validation to assess its performance, which serves as its fitness. Based on these fitness values, a subset of top-performing individuals is selected to serve as parents for the next generation. New offspring architectures are then generated by applying genetic operators such as crossover and mutation. This evolutionary cycle—consisting of evaluation, selection, and reproduction—continues until a predefined number of generations is reached. Upon termination, the architecture with the highest fitness is returned as the final output.

Traditional ENAS involves a three-level nested evaluation process: (1) iterating over each individual in the population, (2) performing multiple training epochs per individual, and (3) executing forward and backward propagation for all batches in each epoch. For example, on the commonly used CIFAR-10 benchmark dataset (containing 50,000 samples and a typical batch size of 128), there are \(\lceil {50000}/{128} \rceil = 391\) batches per epoch, where \(\lceil \cdot \rceil\) denotes the ceiling operator. With 500 epochs, each individual undergoes approximately \(2 \times 10^{5}\) training iterations. On a single GPU, each full training may take about 17 hours. Given typical ENAS settings (e.g., 1,000 candidate architectures), the total computation time can reach 700 GPU-days. Even with 20 GPUs, the process would take approximately 35 days, which is still infeasible for users with limited resources.
{ 
In this paper, the term ``compute budget''(maximum number of fitness evaluations in EAs) refers to the maximum number of candidate architectures that are fully trained and validated during the search—equivalently, the number of available \emph{architecture--performance labels} used to fit and update the surrogate predictor. This notion is orthogonal to the amount of task training data (e.g., images) used to optimize the weights of a given candidate network. For example, on CIFAR-10 (50{,}000 labeled training images across 10 classes and 10{,}000 test images), one may fix a compute budget of 200 evaluations, meaning that at most 200 distinct architectures are trained to completion and validated, regardless of the image dataset size. Accordingly, when we refer to “labels” for training the predictor, we mean architecture--performance pairs rather than labels of the underlying images. This clarification is important because the compute budget directly caps the number of costly full trainings.}

The introduction of performance predictors addresses this bottleneck: they enable fast and accurate performance estimation of candidate architectures,
dramatically reducing evaluation overhead.

In terms of selection, ENAS adheres to the “survival of the fittest” principle in evolutionary computation, consisting of parent selection and environmental selection. Parent selection chooses individuals from the current population to generate offspring via crossover and mutation, while environmental selection merges parent and offspring populations and retains the top-performing individuals for the next generation.

\subsection{Predictor-Assisted NAS}
Recent studies have proposed the use of neural predictors to accelerate the estimation of architectural performance. NAS-EA-FA~\cite{pan2021neural} proposed enhancing the performance of the predictor model by utilizing data augmentation and diversity of neural architectures. CTFGNAS~\cite{ASOC_GNN} designs a two-level search space at both the graph and node levels, employing a graph-kernel-based surrogate predictor to estimate the performance of GNNs, which yields promising results. 
MLP-GNAS~\cite{MLP-GNAS} introduces a CNN-based performance predictor to estimate the accuracy of deep learning models without fully training them for an entire cycle, thereby reducing the search time required to identify the best model for a given task. Lupión et al.~\cite{resource-constrained_NAS_pre} proposed a predictor that leverages online learning of early-stage training behaviors of candidate networks to rapidly identify architectures unlikely to deliver performance improvements, thereby reducing redundant computations and accelerating the NAS search process. An inaccurate predictor may mislead the population evolution toward suboptimal directions; thus, enhancing the reliability of performance prediction is crucial to the success of ENAS. 
Some studies have suggested the use of contrastive learning to train surrogate models for architecture performance evaluation. 
For example, Xie et al.~\cite{xie2023architecture} increased the training data through graph isomorphism theory to alleviate the impact of predictor model errors. TCMR-ENAS~\cite{three_SAENAS} adopts a novel triplet contrastive surrogate model combined with a score-based performance evaluation method to predict the ranking of architectures within each group.

\subsection{Self-Supervised Learning and Its Application to NAS}
Self-supervised learning can be divided into four major categories~\cite{balestriero2023cookbook}: (1) Deep Metric Learning~\cite{Self-Supervised_Generative_Contrastive}: This method trains the network by making the embeddings of inputs closer (or farther) to predict whether two inputs belong to the same category. 
(2) Self-supervised methods based on self-distillation~\cite{Deep_clustering} input a piece of data into two encoders to produce two different views, and map one to the other through a predictor.(3) Canonical Correlation Analysis (CCA)~\cite{Canonical_Correlation_Analysis}: The high-level goal of CCA is to infer the relationship between two variables by analyzing their cross-covariance matrix. (4) Masked Image Modeling (MIM): Mask a part of the image and teach the model to complete it \cite{oquab2024dinov}. 

To address the issue of high computational cost in NAS, 
some recent methods have integrated unsupervised or self-supervised learning into NAS. The embedding of the neural architecture is obtained through unsupervised representation learning and used to train the predictor model. Arch2vec~\cite{Arch2vec} assumes that the embedding of the architecture follows a Gaussian distribution and uses a variational autoencoder to reconstruct the input neural architecture, but this assumption cannot be guaranteed. NASGEM~\cite{NASGEM} uses an autoencoder to map the architecture to the embedding space and improves feature representation by minimizing reconstruction loss and similarity loss. However, NASGEM only vectorizes the adjacency matrix of the input neural structure to the embedding space, ignoring the node operations that are crucial to network performance. 
The SSNENAS~\cite{self-supervised-nas} designed a self-supervised learning method, and the architecture obtained under the restriction of a fixed budget is superior to most of the ENAS. SAENAS-NE~\cite{SAENAS-NE} designed an unsupervised training method for the predictor model based on graph2vec and proposed an online Surrogate-Assisted Evolutionary Algorithm (SAEA) filling criterion that deals with the trade-off between convergence and model uncertainty to label promising individuals during the evolutionary process.

\subsection{Research Motivation}
We noticed that some previous studies also attempted to utilize unlabeled architectures. However, unsupervised architectures are prone to generating a large amount of meaningless structures, and self-supervised architectures overlook the isomorphism of neural architectures as well as the broader and context-dependent variations of neural architectures in the search space. In contrast, we revisit neural architectures from the perspective of information flow and conduct representation learning based on the similarity between different semantic transformations of the same architecture in identified batches. This approach can capture more subtle and operationally important patterns that traditional single architecture transformations may overlook. Additionally, we employ a two-stage contrastive learning model to not only enhance the accuracy of the predictor by precise feature differentiation and grouping in the self-supervised stage but also efficiently adjust the model in the contrastive fine-tuning stage by focusing on relative architectural performance. This dual strategy not only reduces reliance on a large amount of labeled data but also ensures that the prediction model is both highly accurate and computationally efficient.

\section{Proposed Algorithm}
\label{sec:algo}

\subsection{Framework}
\begin{figure*}[t]
    \centering
    \includegraphics[width=0.95\textwidth]{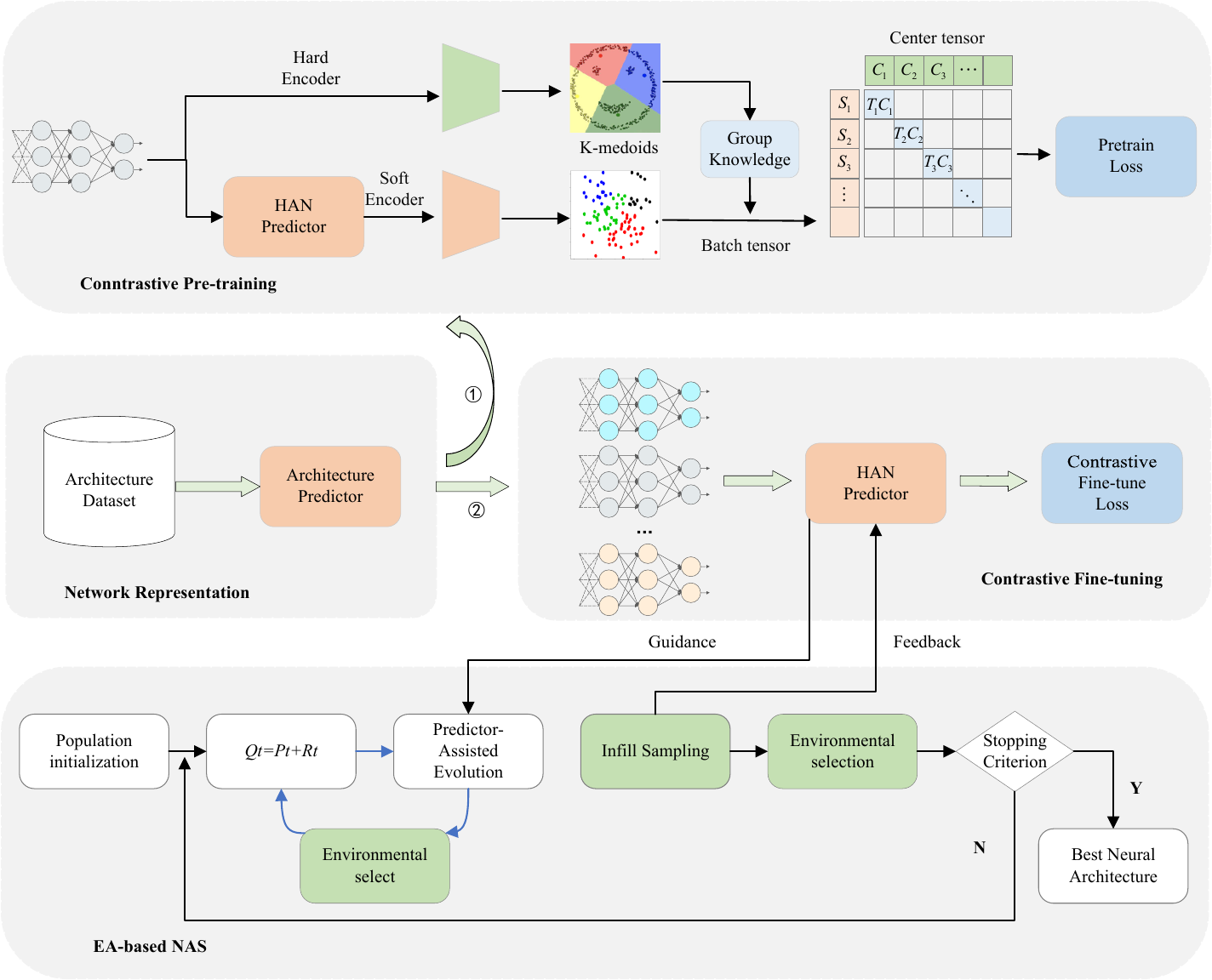}
    \caption{Overview of the proposed DCL-ENAS framework, which consists of a contrastive pretraining stage (CLP) and a contrastive fine-tuning with evolutionary search stage (CLF-ENAS).}
    \label{fig:framework}
\end{figure*}

\begin{algorithm*}[t!]\algsetup{linenosize=\tiny} \footnotesize
    \caption{DCL-ENAS}
    \label{algo:framework}
    \begin{algorithmic}[1]

\REQUIRE~ $\mathcal{A}$: Architecture search space; $N$:Size of population; $r$: Number of offspring members associated with each reference solution;
$fes_{\text{\text{max}}}$: The maximum number of the real evaluations;~\\ 
\ENSURE The best neural network architecture.

\STATE $W^{\text{pre}} \leftarrow$ \textbf{ContrastivePretrain}($\mathcal{A}$) \COMMENT{Pretrain soft encoder on unlabeled architectures (CLP stage).} 
\STATE $D_{\text{init}}$ = $\{a_1, a_2, ..., a_N\}$ \COMMENT{Sample $N$ architectures and query their real validation accuracies. Each architecture $\alpha_i$ and its corresponding validation accuracy $ACC_i^{eval}$ constitute one training sample for the predictor, denoted as $a_i = (\alpha_i, ACC_i^{eval})$.} 
\STATE $D_{\text{total}}$ $\gets$ $D_{\text{init}}$ 
\STATE $fes$ = $|D_{\text{init}}|$
\STATE $M \leftarrow$ \textbf{InitializePredictor}($W^{\text{pre}}$) \COMMENT{Load predictor model $M$ with weights $W^{pre}$.}
\STATE $M \leftarrow$ \textbf{FineTune}($M$, $D_{\text{init}}$) \COMMENT{Fine-tune predictor $M$ on labeled data $D_{\text{init}}$.}
\STATE $P_{\text{label}}$ $\gets$ $D_{\text{init}}$ \COMMENT{$P_{\text{label}}$ denotes the set of all architectures with ground-truth validation accuracy.}
\WHILE{$fes$ $<$ $fes_{\text{max}}$}
     \STATE $t$ = 1, $P_t$ = $P_{\text{label}}$;
     \STATE \COMMENT{Lines 11\textasciitilde16: Run evolution for $t_{\text{gap}}$ generations before querying real evaluations.}
     \WHILE{$t$ $<$ $t_{\text{gap}}$}
        \STATE $Q_t \leftarrow$ \textbf{GenerateOffspring}($P_t$, $r$) \COMMENT{Generate offspring via crossover and mutation.}
        \STATE $R_t \leftarrow P_t \cup Q_t$
        \STATE $P_{t+1} \leftarrow$ \textbf{EnvSelection}($R_t$, $N$, $M$) \COMMENT{Use predictor $M$ to rank and select top-$N$ for next generation.} 
        \STATE $t \leftarrow t + 1$
     \ENDWHILE
    \STATE $P_{\text{infill}} \leftarrow$ \textbf{InfillSampling}($P_{t_{\text{gap}}}$, $N_{\text{infill}}$, $M$, $r$) \COMMENT{Select subset for real evaluation based on $M$’s prediction and uncertainty.}

    \STATE $P_{\text{infill}}$ $\leftarrow$ \textbf{QueryRealAcc}($P_{\text{infill}}$) \COMMENT{Train on training set and evaluate on validation set to obtain true accuracy.}
    \STATE $fes \leftarrow fes + |P_{\text{infill}}|$
    \STATE $D_{\text{total}} \leftarrow D_{\text{total}} \cup P_{\text{infill}}$
    \STATE $M \leftarrow$ \textbf{FineTune}($M$, $D_{\text{total}}$)
    \STATE $P_{\text{label}} \leftarrow$ \textbf{TopSelect}($P_{\text{label}} \cup P_{\text{infill}}$, $N$)
\ENDWHILE
\STATE $P_{\text{best}} \leftarrow \arg\max_{a \in D_{\text{total}}} \text{ValAcc}(a)$
\RETURN  $P_{\text{best}}$
\end{algorithmic}
\end{algorithm*}

As illustrated in \autoref{fig:framework}, DCL-ENAS consists of two main stages: contrastive pre-training stage~(\autoref{subsec:pretrain}) and contrastive fine-tuning and evolutionary search stage~(\autoref{subsec:SAEA}).

\begin{itemize}
    \item Contrastive Pre-training (CLP) Stage:
    Before the evolutionary search process begins, the collection of all neural structures in the search space is considered as unlabeled data for pre-training our neural architecture predictor model without having obtained their validation accuracy. Specifically, for a batch of neural structures in the search space, we first encode them using our designed hard encoder (\autoref{subsubsec:hard}) and soft encoder (\autoref{subsubsec:soft}) to obtain the hard and soft encoding of each structure. Subsequently, we cluster the hard encodings, where the learning objective of the soft encoding is to be as close as possible to the prototype (center structure of the group) within each group and as far away as possible from the prototypes of other groups(\autoref{subsubsec:self-supervise}). The soft encoder here is part of the neural predictor, aiming for the neural architecture predictor to learn meaningful representations of neural architectures.
    \item {Contrastive Fine-tuning and Evolutionary Search (CLF-ENAS) Stage:} In this stage, we employ the soft encoder in our predictor model and load the pre-trained weights. During the search process, the neural architecture predictor uses the true validation accuracy based on descendant architectures as fine-tuning data to learn the accuracy ranking of neural architectures (\autoref{subsubsec:finetune}). Descendant neural architectures are generated during the evolution process (\autoref{subsubsec:produce_off}), sampled according to an infill criterion~(\autoref{subsubsec:fill_sample}). The fine-tuned neural architecture predictor is used to estimate the performance of neural architecture individuals in the environment selection process (\autoref{subsubsec:environment_select}). We obtain a true accuracy of the neural architecture as a fitness evaluation. The algorithm terminates and generates an optimal individual once the maximum fitness evaluation budget is reached.
\end{itemize}

The detailed procedure of DCL-ENAS is shown in Algorithm~\ref{algo:framework}. In the CLP stage (Lines 1), contrastive learning is conducted over the architecture space $\mathcal{A}$ to pretrain the soft encoder, yielding the initial parameters $W^{\text{pre}}$ for the predictor model. Then, $N$ architectures are randomly sampled from $\mathcal{A}$ and fully evaluated on a validation set to construct the initial labeled dataset $D_{\text{init}}$. This dataset is used to fine-tune the predictor $M$ and initialize the labeled population $P_{\text{label}}$, while the evaluation counter is set to $fes = |D_{\text{init}}|$. The CLF-ENAS stage (Lines 8–25) alternates between surrogate-assisted evolutionary search and performance evaluation. In each outer iteration, an evolutionary population $P_t$ is derived from $P_{\text{label}}$, and $t_{\text{gap}}$ generations of evolution are performed using crossover and mutation. The candidate offspring and parents are merged to form $R_t$, and the predictor $M$ is employed to estimate their performance. The top $N$ architectures are selected to construct $P_{t+1}$. This inner loop is surrogate-only and incurs no evaluation cost. After every $t_{\text{gap}}$ generations, fitness evaluations are triggered. A subset of $N_{\text{infill}}$ architectures is selected from the current population via infill sampling, considering both predicted scores and uncertainty estimates. These candidates are fully trained and validated to obtain true accuracies, which are then added to $D_{\text{total}}$ and used to further fine-tune $M$. The labeled population is updated by selecting the top $N$ architectures from $P_{\text{label}} \cup P_{\text{infill}}$ based on true accuracy.
The process repeats until $fes$ reaches $fes_{\text{max}}$\footnote{{ Compute budget equals the maximum number of fully trained and validated architectures \(fes_{\max}\), which directly controls the number of architecture--label pairs used for fine-tuning the predictor.}}, after which the best-performing architecture $P_{\text{best}}$ is selected from $D_{\text{total}}$ and returned.

\subsection{Contrastive Learning Pretraining Stage}
\label{subsec:pretrain}

We first introduce the hard encoder of neural network architectures and the soft encoder to be applied within the neural architecture predictor (the following is referred to as the predictor model). Then, we introduce how to adjust the soft encoder through self-supervise learning of grouping knowledge generated from the hard encoder.
\subsubsection{Hard Encoder}
\label{subsubsec:hard}
\begin{figure}[t]
    \centering
    \includegraphics[width=1\textwidth]{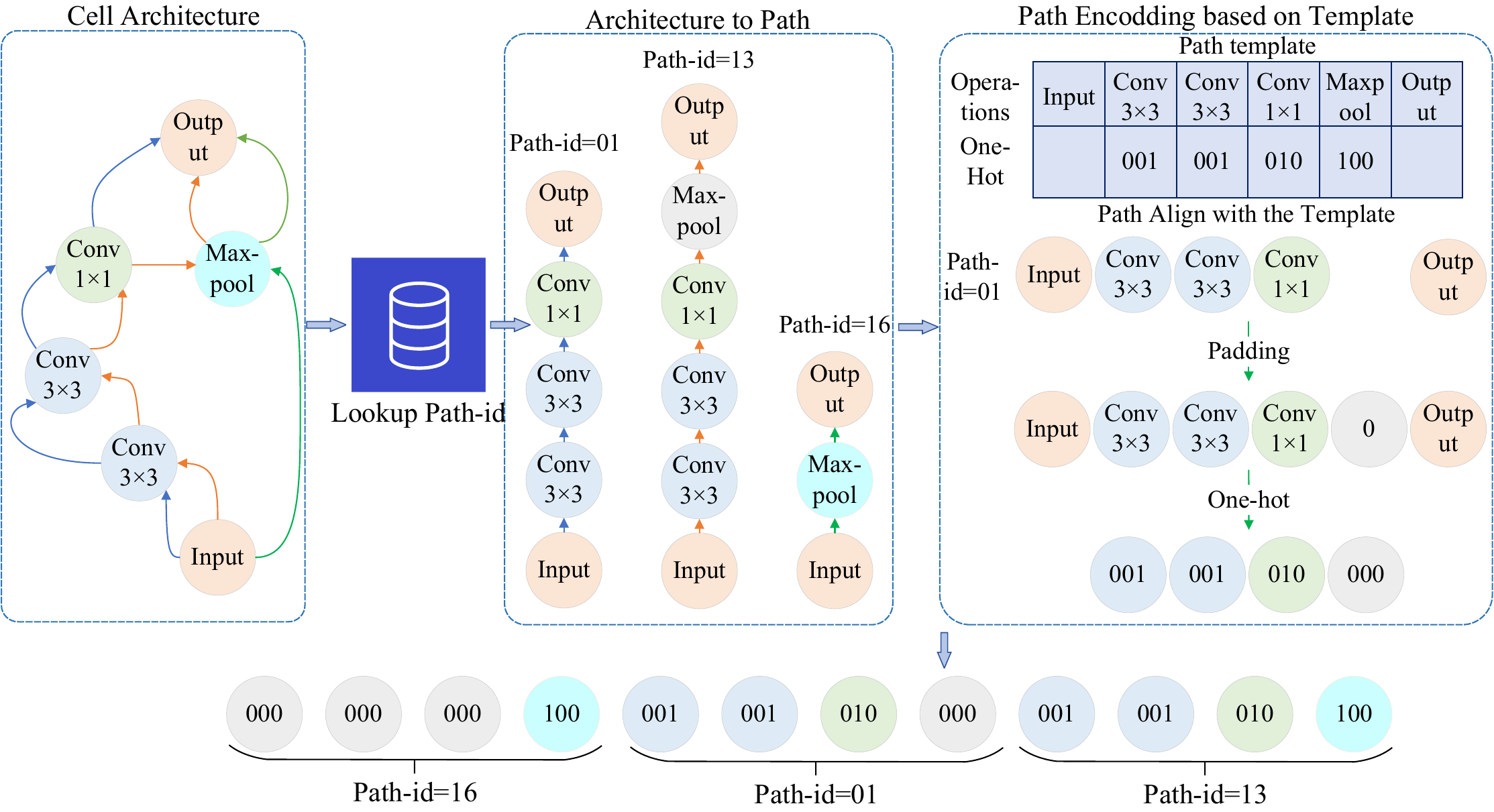}
    \caption{Illustration of the hard encoding process: each architecture is encoded into a unified binary vector based on a predefined path template and corresponding path identifiers (Path-ids).}
    \label{fig:hard_coding}
\end{figure}

Our work utilizes cell-based search spaces, as detailed in NASBench-101 and NAS-\allowbreak Bench-201 \cite{bench201}, 
where each cell's structure is represented as a Directed Acyclic Graph (DAG)~\cite{Parallel_NAS_DAG}. Within this DAG, each node is capable of executing specific operations, such as various convolutional or pooling functions. To capture and represent the flow of information within the network structure, we conceptualize the entire architecture as a collection of multiple information flow paths extending from the input to the output nodes. As depicted in \autoref{fig:hard_coding}, this typical architecture includes three distinct paths.

Given a predefined search space with a fixed operator ordering, we enumerate all possible paths and assign each a unique and consistent identifier, referred to as the \emph{Path-id}, which remains fixed throughout the search process. For instance, in a search space with 1000 distinct paths (after deduplication), we index these paths from 0 to 999 and store them in a database for later reference. Additionally, we define a global operation list, referred to as the \emph{path template}, which specifies a fixed order of all possible operations to align the dimensions of all path encodings. For example, a template such as \texttt{[Input, Conv3×3, Conv3×3, Conv1×1, Maxpool, Output]} would result in intermediate nodes (excluding input/output) being represented by fixed-length one-hot vectors, e.g., \texttt{[001,001,010,100]}.

During encoding, each architecture is decomposed into its valid paths. Each path is encoded as follows: for each operator along the path, its corresponding one-hot vector is retrieved from the template; positions present in the template but not covered by the path are filled with zero vectors to maintain a consistent length. For instance, a path with three valid operations (e.g., Path-id = 01) is encoded as \texttt{001 001 010 000}, comprising three one-hot vectors followed by one zero vector. Similarly, a path with only a single \texttt{Maxpool} operation (e.g., Path-id = 16) is encoded as \texttt{000 000 000 100}, with zero vectors for the first three positions.

To ensure the uniqueness and comparability of encoded architectures, all path encodings are sorted first by the number of valid operations and then by ascending Path-id. For example, a set of paths with Path-ids 01 (3 ops), 13 (4 ops), and 16 (1 op) would be reordered as $\langle 16 \rangle$, $\langle 01 \rangle$, $\langle 13 \rangle$. Considering that different architectures may contain varying numbers of paths, we define a fixed upper bound $L_{seq}$ for the maximum number of paths per architecture. Architectures with fewer than $L_{seq}$ paths are padded with zero vectors (representing identity mappings) to maintain uniform vector dimensions across all encodings.

Finally, we concatenate all path encodings, which serves as the hard encoding for the architecture. Our innovative hard encoding method translates each specific architecture into a unique binary vector. This approach facilitates direct comparisons between different architectures. By applying the Manhattan distance to these binary vectors, we can quantitatively assess the similarity or difference between any two architectures. 

The specific assignment of the one-hot vectors (such as assigning \texttt{Conv3$\times$3} as [001], \texttt{Conv1$\times$1} as [010], and \texttt{MaxPool} as [100]) is arbitrary; any distinct one-hot encoding is valid. The essential requirement is that each operation type receives a unique vector, and the encoding scheme remains globally consistent across the entire search space. 

\begin{proposition}[Operation–code permutation invariance]\label{prop:perm_inv}
    Let \(\phi_1, \phi_2 : \mathcal{O}\!\to\!\{0,1\}^{|\mathcal{O}|}\) be two bijective one-hot
    encodings that differ only by a permutation of coordinates.  
    For any two paths (or architectures) \(x,y\),
    \[
    d_{\text{manhattan}}\!\bigl(\Phi_{\phi_1}(x),\Phi_{\phi_1}(y)\bigr)\;
    =\;
    d_{\text{manhattan}}\!\bigl(\Phi_{\phi_2}(x),\Phi_{\phi_2}(y)\bigr).
    \]
\end{proposition}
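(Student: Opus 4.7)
The plan is to observe that a coordinate permutation of the per-operation one-hot code induces a global coordinate permutation of the full hard encoding, and that the Manhattan (i.e.\ $\ell_1$) distance is invariant under any such permutation.

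First I would make the encoding map explicit. Let $\pi$ be the permutation of $\{1,\dots,|\mathcal{O}|\}$ such that $\phi_2 = P_\pi \circ \phi_1$, where $P_\pi$ is the associated permutation matrix. I would then walk through the three ingredients of the hard encoding defined in \autoref{subsubsec:hard}: (i) the per-node one-hot block of length $|\mathcal{O}|$ dictated by the path template, (ii) the zero-block padding used when a template slot is unoccupied, and (iii) the additional zero-vector padding up to $L_{seq}$ paths. On each of these blocks $P_\pi$ either permutes a genuine one-hot vector, or maps the zero vector to itself; so the $\phi_2$-encoding of any individual path equals $P_\pi$ applied block-wise to the $\phi_1$-encoding.

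Next I would lift this block-wise action to the full architecture vector. Because the canonical sort of the paths (by number of valid operations and then by Path-id) depends only on the fixed path template and the fixed Path-id assignment, not on the choice of one-hot codebook, the ordering of the per-path blocks is the same under $\phi_1$ and $\phi_2$. Hence $\Phi_{\phi_2}(x) = \widetilde{P}_\pi\, \Phi_{\phi_1}(x)$, where $\widetilde{P}_\pi$ is the block-diagonal permutation matrix obtained by repeating $P_\pi$ once for every one-hot slot in the full encoding. The same identity holds for $y$ with the \emph{same} matrix $\widetilde{P}_\pi$. Applying this to the difference $\Phi_{\phi_1}(x) - \Phi_{\phi_1}(y)$ and taking the $\ell_1$ norm then yields the claim, since any permutation matrix preserves $\|\cdot\|_1$ coordinate by coordinate.

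The main obstacle is not the $\ell_1$ step itself---that is a one-liner---but the bookkeeping that guarantees $\widetilde{P}_\pi$ really acts globally and consistently. In particular, I would have to verify carefully that (a) the Path-id enumeration and the path template are chosen before any one-hot codebook is fixed and are therefore $\phi$-independent, so the ordering of per-path blocks is not silently reshuffled when $\phi$ changes, and (b) both the intra-path zero padding and the inter-path identity-map padding commute with $P_\pi$. Once those two structural invariants are checked, the proposition reduces to the standard fact that permuting coordinates preserves the $\ell_1$ norm.
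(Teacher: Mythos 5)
Your proposal is correct and follows essentially the same route as the paper's proof: lift the coordinate permutation $P_\pi$ to a block-diagonal permutation $\widetilde{P}_\pi$ acting on the whole encoding (the paper's $P'$), then invoke the permutation invariance of the $\ell_1$ distance. Your additional checks that the zero-padding blocks are fixed by $P_\pi$ and that the path ordering is codebook-independent are sensible bookkeeping the paper leaves implicit, but they do not change the argument.
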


\begin{proof}
By definition of one-hot encoding, $\phi_a$ and $\phi_b$ differ only by a permutation of vector positions. For any operation $o_i \in \mathcal{O}$, there exists a permutation matrix $P$ such that $\phi_b(o_i) = P \cdot \phi_a(o_i)$. For any path $p$, its representations under the two encodings can be written as: $\Phi_{\phi_a}(p)$ = $[\phi_a(o_{p1})$, $\dots$, $\phi_a(o_{pL})]$,
    $\Phi_{\phi_b}(p)$ = $[P\phi_a(o_{p1})$, $\dots$, $P\phi_a(o_{pL})]$ = $P' \cdot \Phi_{\phi_a}(p)$. where $P'$ is the permutation matrix extended to the entire path encoding vector, which only permutes the order of vector components without altering their values. By the definition of Manhattan distance, for any two paths or architectures $x,y$, the distance is the sum of absolute differences of the corresponding components:
    $d_{\text{manhattan}}(\Phi_{\phi}(x), \Phi_{\phi}(y))$ =
    $\textstyle\sum_{j} \left| \Phi_{\phi}(x)_j - \Phi_{\phi}(y)_j \right|$, 
    Since the $|a-b|$ operation is independent of the order of components, we have:
    $d_{\text{manhattan}}\left( \Phi_{\phi_b}(x), \Phi_{\phi_b}(y) \right)
    $ = $d_{\text{manhattan}}\left( P' \cdot \Phi_{\phi_a}(x),\; P' \cdot \Phi_{\phi_a}(y) \right)$
    = $d_{\text{manhattan}}\left( \Phi_{\phi_a}(x),\; \Phi_{\phi_a}(y) \right)$.
    This derivation shows that any permutation of the one-hot vectors for operations does not affect the computed Manhattan distance, which completes the proof.
\end{proof}

\subsubsection{Soft Encoder}
\label{subsubsec:soft}
\begin{figure}[t]
    \centering
    \includegraphics[width=1\textwidth]{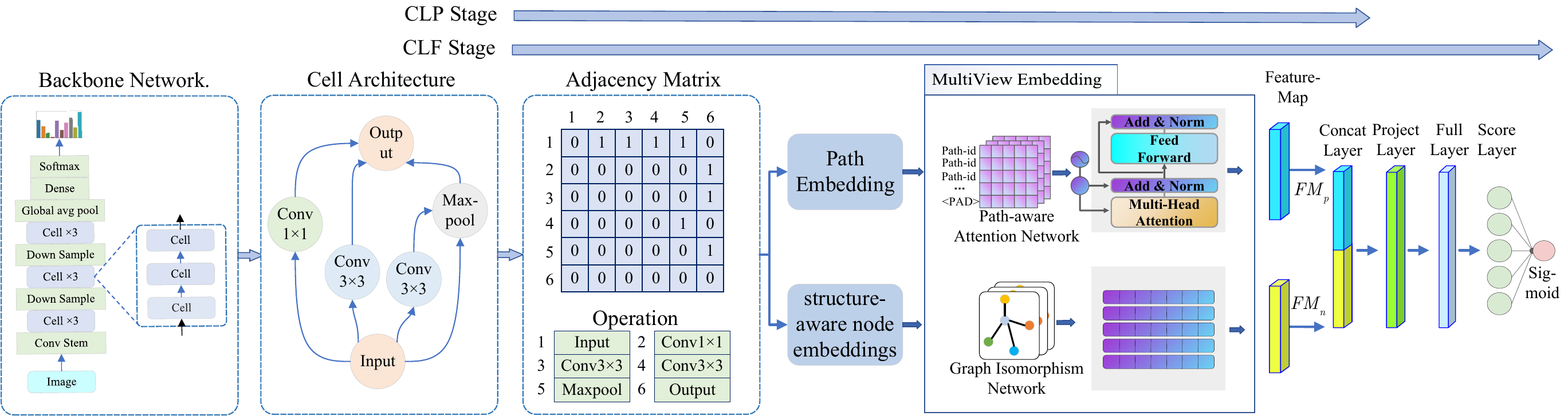}
    \caption{The architecture of the DCL-ENAS neural architecture predictor. Each candidate architecture is represented from two distinct perspectives, resulting in two fixed-length feature maps. These are subsequently processed through four layers—Concat, Project, Full, and Score—to produce a final ranking score. The final "Score" layer employs a Sigmoid activation function to normalize the output into the range $[0, 1]$. During the CLP stage, predictor parameters are updated up to the Concat layer, while the CLF stage introduces and optimizes new parameters specific to the Score layer.}
    \label{fig:softencoder}
\end{figure}
The predictor of DCL-ENAS considers attention at multiple scales in order to achieve more comprehensive feature extraction of neural architectures. The structure of the predictor model is shown in \autoref{fig:softencoder}. A neural network architecture will get two feature map tensors $FM_n$ and $FM_p$ after being represented from the perspectives of node attention and path attention. Then it goes through four layers of neural networks (Concat, Project, Full, Score) to get a scalar output. Note that there should be a non-linear transformation $ReLU$ layer after the project layer. To ensure both interpretability and boundedness of the predicted performance scores, a Sigmoid activation function is applied at the final output layer of the predictor model. As a result, each predicted score $\hat{y}_i$ is constrained within the range $[0,1]$. This normalization facilitates direct comparison among candidate architectures and is particularly suitable for ranking operations in the evolutionary selection process. The soft encoder, a key component of the predictor model, spans from the input to the Concat layer.

a)~Node attention: When calculating node attention, a neural network architecture is considered from the perspective of graph neural networks. First, we use the Graph Isomorphism Network (GIN) algorithm~\cite{GIN} to extract the feature vector of each operation node in each neural network architecture: 
\begin{equation}
        {H}_v^{(m)} = \text{MLP}^{(m)} \left( (1 + \epsilon^{(m)}) \cdot {H}_v^{(m-1)} + \sum_{{U} \in \mathcal{N}(v)} {H}_{U}^{(m-1)} \right)
\end{equation}
where ${H}_v^{(m)}$ represents the feature of node $v$ at the $m$-th layer. $\mathcal{N}(v)$ is the set of neighbor nodes of $v$, $\text{MLP}^{(m)}(\cdot)$ is a multilayer perceptron, and $\epsilon^{(m)}$ is a learnable parameter. An embedding of a set of nodes is obtained by weighted averaging to get an embedding of a graph. However, which nodes are more important and should get more weight depends on a specific similarity measure. Therefore, in order for the model to learn weights guided by a specific similarity measure, we propose the following attention mechanism.

First, compute a global graph context $\vect{c}\in\mathbb{R}^D$, which is a simple average of node embeddings followed by a non-linear transformation:
\begin{equation}
\vect{c} = \tanh\left(\frac{1}{{v_s}}\mat{W}\sum_{v=1}^{v_s}{H}_v\right)
\end{equation}
where $\mat{W}\in\mathbb{R}^{D\times D}$ is a learnable weight matrix. The global graph context $\vect{c}$ provides global structural and feature information of the graph, which is adapted to the given similarity measure by learning the weight matrix. Based on $\vect{c}$, we can calculate an attention weight for each node.
In order for the attention $a_v$ of node $v$ to be aware of the global context, we take the inner product of $\vect{c}$ with its node embedding. Apply the sigmoid function $\sigma(x)$ to the result to ensure that the attention weight is within the $(0,1)$ range. Finally, the graph embedding ${{H_g}}\in\mathbb{R}^D$ is the weighted sum of node embeddings. The following equation summarizes the proposed node attention mechanism:

\begin{equation}
{H_g}=\sum_{v=1}^{v_s}\sigma({H}_v^T\vect{c}){H}_v
\end{equation}
Finally, output a $d_{FM}$-dimensional Feature Map tensor $FM_n$.

b)~Path attention: As elaborated in Hard Encoder (\autoref{subsubsec:hard}), we can also consider an architecture as a network of multiple paths of information flows. Each path possesses a binary hard encoding, and 
is embedded into the feature space as a `Token'. 
For each path $p$ in the path table $\mathcal{E} $ of the search space, we map it to a vector $\vect{e}_{p}\in\mathbb{R}^{D_{e}}$ using the lookup table $\vect{E}\in\mathbb{R}^{D_{e}\times|\mathcal{E}| }$, where $D_{e}$ is a hyperparameter representing the dimension of token embeddings. These path embeddings can be trained and learned together with other model parameters.

To ensure that the tensor shapes of path embeddings of all neural architectures in the search space are the same, we stipulate that a neural architecture contains at most $L_{seq}$ tokens. For neural architectures with a number of paths less than $L_{seq}$, a special token $\langle$PAD$\rangle$ will be padded.
After embedding processing, an architecture can be represented as a list of path embedding vectors [$\vect{e}_{1},\vect{e}_{2},\cdots,\vect{e}_{L_{\text{seq}}}$]. 
We leverage the encoder component of the transformer architecture, as detailed in~\cite{transformers}, to facilitate interaction among these path embeddings (\autoref{fig:path-encoder}). This interaction allows each path to attend to information from the others, enhancing the representation by integrating contextual information from the entire architecture.
Finally, the path attention module outputs a $d_{FM}$-dimensional Feature Map tensor $FM_p$.

\begin{figure}
    \centering
    \includegraphics[width=0.7\textwidth]{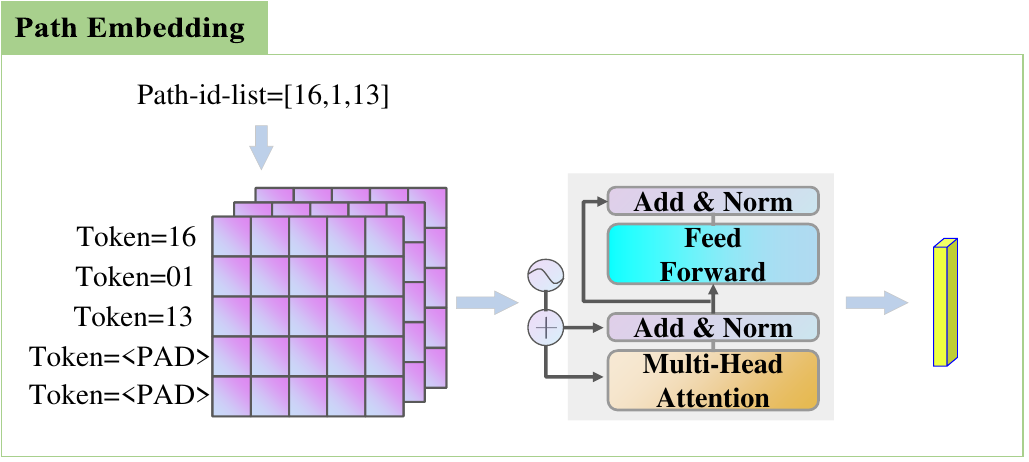}
    \caption{In search space, all paths will have a unique path index (Path-id). Therefore, for a neural structure, we can obtain the Path-id list $P_{\text{ids}}$ of this structure (the same as the hard-coded processing method, multi-level sorting of the path is required), and then the path index Path-id is embedded into the feature space as a Token. We stipulate that an architecture contains at most $L_{\text{seq}}$ paths. If the number of paths in an architecture is less than $L_{\text{seq}}$, a special token "$\langle$PAD$\rangle$" will be filled. The path identifier for the special token "$\langle$PAD$\rangle$" is a very large positive integer.
    }
    \label{fig:path-encoder}
\end{figure}

\subsubsection{Self-Supervised Pretraining of Soft Encoder}
\label{subsubsec:self-supervise}

The parameters associated with our soft encoder are learnable and can be adjusted during the training phase of our predictor model. To minimize the reliance on extensive labeled data of network performance, we initially pre-train the soft encoder in a self-supervised manner. This pre-training procedure utilizes grouping knowledge of similar architectures, derived from their hard representations, to facilitate contrastive learning. 

Specifically, our hard encoder employs $K$-medoids clustering within a batch to categorize architectures into groups. Given that hard encodings are binary vectors, we use the Manhattan distance to determine the distances required for clustering in $K$-medoids. Following the clustering, the soft encoder is pre-trained using the acquired grouping knowledge. Within the same batch, an architecture, denoted as $\alpha_i$, pairs positively with its group's prototype architecture (the center architecture of its group) and negatively with prototypes from other groups. Then, contrastive learning is performed to enhance the similarity score of the positive sample pair, while reducing the similarity of the negative sample pair.
The process of self-supervised learning can be seen from \autoref{fig:framework}.

\begin{definition}{Pretraining Loss:}
    \label{ploss}
    For a batch of $bn$ samples, consider a query (soft encoding vector) $E_{i}$ and a set of center encodings $\{E_{c_{1}},E_{c_{2}},\cdots,E_{c_{{K}}}\}$, where $c_k$ denotes the index of a prototype point. %When it is similar to the positive key $E_{k_{i}}$ and dissimilar to other keys, the loss is a low value. Dot product is used to measure similarity:
    The loss is minimized when $E_i$ is similar to its corresponding positive key $E_{c_j}$ (suppose the query belongs to the $j$-th group) and dissimilar to other keys:
    \begin{equation}
        L_{\rm{pretrain}}=\sum\limits_{i = 1}^{bn} {
        -\log\frac{\exp(E_{i}\cdot E_{c_{j}}/\tau_{c_{j}})}{\sum_{{k}=1,{k}\neq j}^{K}\exp(E_{i}\cdot E_{c_{{k}}}/\tau_{c_{k}})} 
        }
    \end{equation}
    where $\tau_{c_{k}}$ represents the Crowding Distance of the $k$-th center point. The sum contains one positive sample and $(K-1)$ negative samples. This loss is a logarithmic loss of a softmax-based $K$-classifier that tries to classify $E_{i}\cdot E_{c_{j}}$ as belonging to the group $j$. 
    To ensure the soft encoder fully assimilates the knowledge within a batch, multiple pre-training sessions are conducted, varying $K$ and hence the grouping sizes. The losses from these sessions are averaged before backpropagation is performed.
\end{definition} 

Note that the clusters possess different density distributions, and we define the crowding distance of the $k$-th cluster as
\begin{equation}
    \tau_{c_{k}} = \frac{\sum_{g_k=1}^{gs_k} \|\hat{E}_{g_k} - \hat{E}_{c_{k}}\|_{1}}{gs_k\log(gs_k+\beta)}
\end{equation} 
Here, we use $\hat{E}$ to denote the hard encoding of an architecture, $g_k$ iterates over the architectures within the same group with the $k$-th prototype, $gs_k$ denotes the total number of these architectures, and $\beta$ is a smoothing parameter (fixed to 
10) to prevent excessively high crowding in small clusters.

The crowding distance plays a critical role in scaling the inner product $E_i \cdot E_{c_k}$ in the pretraining loss, modulating the influence of different prototypes based on their cluster densities. Specifically, this scaling adjusts the softmax distribution's sharpness, making it steeper in denser clusters (smaller crowding distance) and flatter in sparser ones (larger crowding distance). It adjusts the influence of different data clusters within the learning process, reducing the inherent bias on sparsely populated areas by appropriately scaling the impact of denser regions, thus promoting a more balanced representation learning across the entire architectural space.

\subsection{CLF-ENAS stage}
\label{subsec:SAEA}

\subsubsection{Fine-tuning of the Predictor Model}
\label{subsubsec:finetune}
During the process of evolutionary neural architecture search, the neural architecture predictor needs to be fine-tuned using real data related to its performance. 
 When training predictors, commonly used loss functions are element-wise mean squared error (MSE) or L1 loss functions \cite{Npenas,SSNENAS,GCN-predictor,SAENAS-NE}. They assume that lower MSE or L1 loss leads to better ranking results. However, this is not always correct. For example, consider two networks with actual classification accuracies of 0.9 and 0.91 on a validation set. In the first case, they are predicted as 0.91 and 0.9, while in the second case, they are predicted as 0.89 and 0.92. The MSE losses in both cases are the same, but the former is worse because there is a change in ranking between the two networks, and the search strategy will select the architecture with poorer performance. We believe that the ranking of prediction accuracy between different architectures is more important than their absolute performance. Given a predictor $\varepsilon$ and two different architectures $\alpha_{1}$ and $\alpha_{2}$, where $\varepsilon(\alpha;w)$ represents the predictive performance of architecture $\alpha$ with weight $w$, the following relationship should hold:
$
\varepsilon(\alpha_{1};w) > \varepsilon(\alpha_{2};w)$
if and only if
$ACC(\alpha_{1},D) > ACC(\alpha_{2},D)$,  where $D$ is the training set with accuracy labels.
This implies that the predictor should correctly rank different network architectures based on their actual performance.

Here, we adopt the principle of contrastive learning to minimize the requirement for labeled data by focusing on relative positions rather than absolute values. Our pairwise loss function is defined below.

\begin{definition}{Fine-tuning Loss:}
For each neural architecture $\alpha_{i}$ and $\alpha_{j}$ ($i \neq j$) within a batch, let their corresponding real target values be $y_{i}$ and $y_{j}$, and the output of the predictor's score layer be $\hat{y}_i$ and $\hat{y}_j$. The loss function, $L_{\rm{finetune}}$, is defined as
\begin{equation}
    L_{\rm{finetune}} = \sum_{i=1}^{N} \sum_{j=1,j \neq i}^{N} 
    \mathbbm{1} [ \text{sign}(\Delta \hat{y}_{ij}) \neq  \text{sign}(\Delta y_{ij}) ] \cdot ( |\Delta \hat{y}_{ij}|)
\end{equation}

Here, $\Delta y_{i j}=y_i-y_j$ and $\Delta \hat{y}_{i j}=\hat{y}_i-\hat{y}_j$ represent the actual and predicted differences in performance between architectures $\alpha_i$ and $\alpha_j$, respectively. The loss is incurred only when the signs of $\Delta y_{i j}$ and $\Delta \hat{y}_{i j}$ differ, indicating a discrepancy in the predicted ranking relative to the actual ranking. If the signs match, meaning the relative ranking is correctly predicted, the loss is zero. This loss function prioritizes the accurate prediction of the relative order rather than absolute performance metrics, effectively expanding the { supervision} to $n(n-1)$ { pairwise constraints, thereby extracting more signal from the same compute budget} commonly faced in neural architecture search tasks.

\end{definition}

\subsubsection{Producing Offspring}
\label{subsubsec:produce_off}

\begin{figure*}[t]
    \centering
    \includegraphics[width=1\textwidth]{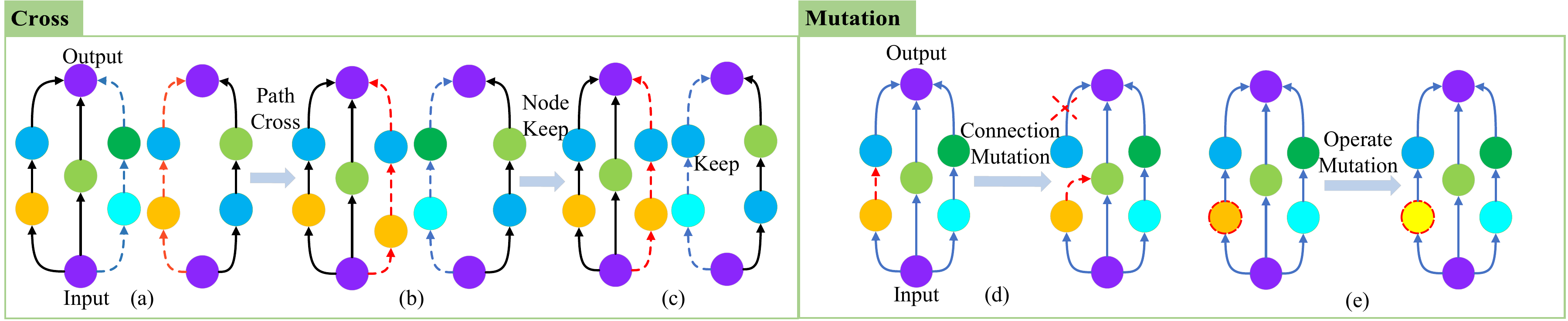}
    \caption{Illustration of offspring generation, including crossover and mutation. The crossover is mainly based on path crossover. There are two types of mutations, one is connection mutation, and the other is operation mutation.}
    \label{fig:cross_mutation}
\end{figure*}
The evolutionary operators for producing offspring here include crossover and mutation operations. In DCL-ENAS, selection pressure is imposed during the environment selection step, where we adopt a rank-based strategy: all candidate architectures ($R_t = P_t \cup Q_t$) are sorted by predicted fitness rank, and the top $N$ are retained for the next generation.
In contrast, parent selection for crossover is conducted via uniform random sampling from the current labeled population $P_t$. This strategy avoids premature convergence and maintains diversity, especially during early generations when the predictor model may still be inaccurate. Subsequently, each parent undergoes mutation with a probability $P_{\rm{m}}$. This sequence of operations is applied to each individual in the population, defining what we term as one complete `round' of the process. To ensure comprehensive exploration of the search space, this cycle is repeated over multiple rounds. The process continues until the total number of offspring produced exceeds $C_{\rm{a}}$ times the population size.

a)~Crossover: We design a crossover operation based on information flow paths tailored to the specifics of cell-based neural architecture, as illustrated in \autoref{fig:cross_mutation}. 
 \autoref{fig:cross_mutation}(a) represents a schematic diagram of two cell-based neural architectures, while \autoref{fig:cross_mutation}(b) shows the random switch of paths between them.
 Given the intricacies of the fitness landscape in neural architecture optimization, it is crucial to maintain stability in the evolutionary steps. As depicted in \autoref{fig:cross_mutation}(c), to avoid excessively large deviations in the architectural lineage, certain nodes are retained post-crossover at corresponding positions with a probability $P_{\rm{keep}}$. The retention probability of each node is the frequency of the node appearing in the neural architecture before the crossover.

b)~Mutation: In our approach, we implement two distinct mutation operators. The first, Connection Mutation (\autoref{fig:cross_mutation}(d)), involves randomly altering the target node of a connection within the network. This modification can lead to a cascade effect where subsequent connections in the information pathway may lose data flow, resulting in their deletion to maintain network integrity. The second operator, Operation Mutation (\autoref{fig:cross_mutation}(e)), entails selecting a node at random and replacing its operation with another, also chosen randomly. This process generates diverse offspring networks from the original parent architecture. By alternately applying these two mutation operators, our method enables exploration of the entire architectural space, potentially uncovering highly effective network architectures.

\subsubsection{Environment Selection and Infill Criterion}
\label{subsubsec:environment_select}
\label{subsubsec:fill_sample}
The environmental selection and infill criterion in DCL-ENAS use operations similar to those in the SAENAS-NE algorithm \cite{SAENAS-NE}. When selecting an environment, each parent neural architecture is associated with $C_{\mathrm{a}}$ closest child architectures to form a family. The predictor estimates the validation accuracy ranking of neural architectures within each family, and the best-performing architecture in each family is chosen for the next generation individual. Infill sampling involves non-dominated sorting based on the estimated performance of neural architectures by the neural architecture predictor and the uncertainty of the predictor, then selecting $N_{\text {infill }}$ new neural architectures from the architecture pool and obtaining their actual validation accuracy.

\subsection{Workflow}
\label{Workflow}

The workflow of DCL-ENAS begins with contrastive pretraining over the set of candidate neural architectures in the search space $\mathcal{A}$ (i.e., the CLP stage). During this stage, each architecture $\alpha$ is first transformed into a unique binary vector via a hard encoder, where information flow within the architecture is captured using path identifiers (Path-ids). Based on these representations, $K$-medoids clustering is performed using Manhattan distance to group architectures and extract structural grouping knowledge across the search space. This unsupervised group information is subsequently used to guide the training of a soft encoder. As a component of the neural architecture predictor $M$, the soft encoder is optimized via a self-supervised contrastive loss, which aims to bring architectures within the same group closer to their assigned prototype (i.e., group centroid), while pushing them away from prototypes of other groups.

Upon completing the pretraining stage, the algorithm enters the contrastive fine-tuning and evolutionary architecture search phase (CLF-ENAS).
\begin{itemize}
    \item First, an initial set of $N$ architectures is randomly sampled, denoted as $D_{\text{init}} = \{a_1, a_2, \ldots, a_N\}$. These architectures are fully trained and evaluated on the validation set $D_{\text{eval}}$ to obtain their actual validation accuracies (i.e., ground-truth fitness). The resulting labeled dataset is denoted as $D_{\text{total}}$ and used to fine-tune the predictor model $M$. The initial population is then set as $P_{\text{label}} = D_{\text{init}}$, and the number of fitness evaluations is initialized as $fes = |D_{\text{init}}|$.
    \item The evolutionary search proceeds iteratively. In each iteration, every architecture in the population generates $r$ offspring through crossover and mutation. Crossover operations combine the path structures of two parent architectures selected uniformly at random, incorporating a node preservation probability to retain structural stability. Mutation operations include connection mutation and operation mutation, which respectively alter the connectivity between nodes and the operation type assigned to each node, thereby enhancing the diversity of the search within $\mathcal{A}$. The offspring set $Q_t$ is then merged with the current population $P_t$ to form a candidate set $R_t$. The predictor $M$ is then used to estimate the relative validation accuracies of all architectures in $R_t$. The top $N$ architectures with the highest predicted scores (normalized to $[0, 1]$) are selected to form the next generation population $P_{t+1}$. Every $t_{\text{gap}}$ generations, a actual fitness evaluation is triggered. Based on both predicted performance and uncertainty estimates from $M$, an infill sampling criterion is applied to select $N_{\text{infill}}$ candidate architectures from the current population. These are fully trained and evaluated on $D_{\text{eval}}$ to obtain their true validation accuracies. The number of evaluations is then updated as $fes \leftarrow fes + |P_{\text{infill}}|$, and the newly evaluated samples are added to $D_{\text{total}}$ for further fine-tuning of $M$.
    \item The evolutionary process continues until the total number of actual evaluations $fes$ reaches the predefined budget $fes_{\text{max}}$. Finally, the architecture with the highest observed validation accuracy in $D_{\text{total}}$ is selected as the best architecture $P_{\text{best}}$ and returned as the final output of the search.
\end{itemize}
\section{Experiments}
\label{sec:experiments}

\subsection{Experiment Setup}
In this section, we conduct experiments on two search spaces, NASBench-101 and NASBench-201:
\begin{itemize}
    \item NASBench-101 has designed a cell-based search space, where cells are defined by DAG on nodes. To limit the size of the search space, only 3 × 3 convolutions, 1 × 1 convolutions, and 3 × 3 max pooling are allowed, with a maximum of 9 edges. All architectures have been trained and evaluated three times on CIFAR-10 with different random initializations. In total, there are 423,624 convolutional neural network architectures.
    \item  NASBench-201\footnote{{ The validation and test accuracies of neural architectures are computed following SAENAS-NE: \url{https://github.com/HandingWangXDGroup/SAENAS-NE}.}} is similar to NASBench-101, containing 15,625 architectures. Each architecture is generated by 4 nodes and 5 related options (zeroize, skip-connect, 1 × 1 convolution, 3 × 3 convolution, and 3 × 3 avg-pool). Each node and edge respectively represent feature mapping and operation. NASBench-201 provides training, validation, and testing accuracy for CIFAR-10, CIFAR-100, and ImageNet-16-120.
\end{itemize}

The experiment is divided into three parts. First, we quantify the performance of the proposed DCL-ENAS on the search space and compare it with existing NAS algorithms. Second, we test the Kendall-$\tau$ coefficient of the architecture validation accuracy and actual validation accuracy estimated by different neural architecture predictors. Finally, the effectiveness of DCL-ENAS is verified through ablation experiments.{ 
We emphasize that this study focuses on limited compute budgets (i.e., a cap on the number of fully trained and validated architectures)—rather than on changes to the \emph{underlying task dataset}. In particular, we do not reduce the size of the training dataset (e.g., by substantially downsampling CIFAR-10). Such reductions in the underlying task dataset can heighten the risk of overfitting for automatically selected architectures across the search space. Moreover, the tabulated NAS benchmarks used here—NASBench-101 and NASBench-201—fix the underlying task datasets and do not report results under systematic sample-size reductions; this precludes a controlled evaluation of reduced-data regimes within these benchmarks. }All experiments were conducted using two NVIDIA GeForce RTX 4090 GPUs and an Intel(R) Xeon(R) Gold 6254 CPU @ 3.10GHz.

\subsection{Algorithmic Settings}
\begin{table}[t!]\tiny
    \centering
    \caption{Parameter settings of DCL-ENAS.}
    \label{tab:parameter}

\begin{tabular}{|c|c|c|c|}
\hline
% \multicolumn{2}{|c|}{NASBench-101} & \multicolumn{2}{c|}{NASBench-201} \\
% \hline
\multicolumn{4}{|c|}{Train Predictor} \\
\hline
Module  & Parameter & NASBench-101 & NASBench-201 \\ 
\hline
\multirow{4}{*}{Pretrain/Finetune} & batch size & 20000/8192 & 10000/8192 \\ 
\cline{2-4} 
& epoch & 200/50 & 200/50  \\ 
\cline{2-4} 
& cluster sizes & [5,10,20] & $ \{x \in \mathbb{Z} | 10 \leq x \leq 20\} $ \\ 
\cline{2-4} 
& $fes_{max}$ & 150 & 100 \\ 
\hline

% \multirow{3}{*}{Finetune} & batch size & 8192 & 8192  \\ 
% \cline{2-4} 
% & epoch & 50 & 50 \\ 
% \cline{2-4} 
% & $fes_{max}$ & 150 & 100 \\ 
% \hline
\multirow{8}{*}{Surrogate model}& Learning rate & 0.001 & 0.001 \\ 
\cline{2-4} 
& $GIN$ Embedding dim & 6 & 8 \\ 
\cline{2-4} 
& Max token count $L_{seq}$ & 8 & 5 \\ 
\cline{2-4} 
& Hard encode length & 120 & 96 \\ 
\cline{2-4} 
& Soft encode length & 32 & 32 \\ 
\cline{2-4} 
& $d_{FM}$ & 16 & 16 \\ 
\cline{2-4} 
& Soft‑feature dim $D$ & 16 & 16 \\ 
\cline{2-4} 
& Token‑embed dim $D_e$ & 128 & 128 \\ 
\hline
\multicolumn{4}{|c|}{Evolutionary and Search} \\
\hline
\multirow{8}{*}{Evolution}& The size of population $N$& \multicolumn{2}{c|}{20} \\ \cline{2-4} 
& Crossover probability $P_c$ & \multicolumn{2}{c|}{0.9} \\ \cline{2-4}
&Mutation probability $P_m$ & \multicolumn{2}{c|}{0.1} \\ \cline{2-4}
&Node‑keep probability $P_{\text{keep}}$ & \multicolumn{2}{c|}{0.5} \\ \cline{2-4}
&Offspring ratio $r$ & \multicolumn{2}{c|}{6} \\ \cline{2-4}
&Evaluation interval $t_{\text{gap}}$ & \multicolumn{2}{c|}{5} \\ \cline{2-4}
&Family radius $C_a$ & \multicolumn{2}{c|}{6} \\ \cline{2-4}
&$\!$Infill size $N_{\text{infill}}$ & \multicolumn{2}{c|}{10} \\ \cline{2-4}
% & Soft‑feature dim$D$ & \multicolumn{2}{c|}{16} \\ \cline{2-4}
% & Token‑embed dim$D_e$ & \multicolumn{2}{c|}{128} \\ 
\hline
\end{tabular}
\end{table}

An overview of the parameter settings for DCL-ENAS is presented in \autoref{tab:parameter}. The descriptions of some important parameters are as follows.

(1) \textbf{Pretraining stage:} In this stage, large batches of architectures from both search spaces are used as unlabeled data to train the discriminative capability of the soft encoder. For the NASBench-101 search space, the batch size is set to 20,000 and the number of training epochs is 200. In each batch, the soft encoder learns from hard-encoded representations via multiple runs of $K$-medoids clustering, where $K \in \{5, 10, 20\}$. For the more compact NASBench-201 search space, the batch size is reduced to 10,000 with 200 training epochs, and the cluster size $K$ is chosen from the set of integers in the range $[10, 20]$. These hyperparameter settings ensure the availability of sufficient contrastive samples while avoiding excessive computational cost, thereby enabling stable acquisition of pretrained weights. To represent architectures, a GIN-based graph neural network is employed to embed the soft-encoded inputs. The node embedding dimensions are set to 6 for NASBench-101 and 8 for NASBench-201, respectively. Correspondingly, the hard-encoded architecture vectors used for contrastive learning have lengths of 120 and 96. 

(2) \textbf{Fine-tuning and evolutionary search stage:} Both search spaces share the same hyperparameter settings during the fine-tuning and evolutionary search phase, including the number of training epochs (50) and batch size (8192). The only difference lies in the maximum number of fitness evaluations, which is set to 150 for NASBench-101 and 100 for NASBench-201. 
During the evolutionary search phase, the population size is fixed at $N=20$, and $r=6$ offspring are generated per generation. The crossover and mutation probabilities are set to $P_c = 0.9$ and $P_m = 0.1$, respectively, with a node preservation probability of $P_{\text{keep}} = 0.5$.
Fitness evaluations are triggered every $t_{\mathrm{gap}} = 5$ generations: in each cycle, surrogate predictions are used to estimate the fitness of individuals for the first four generations, while architectures in the fifth generation are fully trained to obtain true validation accuracy. To promote population diversity, environmental selection is conducted within kinship clusters defined by a radius of $C_a = 6$.
At each actual evaluation step, the infill strategy selects $N_{\text{infill}} = 10$ promising architectures for full evaluation. The results are added to the predictor’s training dataset for further fine-tuning. We emphasize that we do not study NAS in regimes where the training data itself is scarce, which may raise overfitting concerns for automatically selected architectures. Such regimes are beyond our present scope.

\subsection{Comparison with NAS algrithm}
\label{subsec:offline-ddea}

\begin{table*}[t]
    \centering
    \caption{Search results on the validation set of different benchmarks.}
    \resizebox{1\linewidth}{!}{
    \label{tab:valid} {
        \begin{tabular}{|c|c|c|c|c|c|c|c|}
\hline Benchmark & NASBench-101 & NASBench-201-cifar10-valid & NASBench-201-cifar100 & NASBench-201-ImageNet16-120 & +/$\approx$/- & Average Rank & Optimized type \\ \hline
Budget & 150 & 100 & 100 & 100 & - & - & - \\ \hline
Random & \text{94.19$\pm$0.16(+)(19)} & \text{90.98$\pm$0.25(+)(19)} & \text{71.36$\pm$0.61(+)(19)} & \text{45.36$\pm$0.60(+)(18)} & 4/0/0 & 18.75 & R \\ \hline
REA & \text{94.39$\pm$0.19(+)(17)} & \text{91.40$\pm$0.27(+)(14)} & \text{72.86$\pm$0.83(+)(14)} & \text{46.07$\pm$0.54(+)(17)} & 4/0/0 & 15.50 & EA \\ \hline
BANANAS-PE & \text{94.64$\pm$0.11(+)(9)} & \text{91.33$\pm$0.36($\approx$)(17)} & \text{72.49$\pm$0.97(+)(17)} & \text{46.21$\pm$0.46(+)(14)} & 3/1/0 & 14.25 & BO \\ \hline
BANANAS-AE & \text{94.67$\pm$0.13(+)(5)} & \text{91.53$\pm$0.17(+)(8)} & \text{73.21$\pm$0.62(+)(10)} & \text{46.56$\pm$0.15(+)(4)} & 4/0/0 & 6.75 & BO \\ \hline
BANANAS-PAPE & \text{94.67$\pm$0.13(+)(6)} & \text{91.50$\pm$0.20($\approx$)(12)} & \text{73.21$\pm$0.62(+)(9)} & \text{46.39$\pm$0.28(+)(10)} & 3/1/0 & 9.25 & BO \\ \hline
BOHAMIANN & \text{94.53$\pm$0.16(+)(14)} & \text{91.37$\pm$0.20(+)(16)} & \text{72.66$\pm$0.71(+)(15)} & \text{46.11$\pm$0.46(+)(16)} & 4/0/0 & 15.25 & BO \\ \hline
BOGCN-NAS & \text{94.62$\pm$0.15(+)(10)} & \text{91.53$\pm$0.13(+)(9)} & \text{73.30$\pm$0.39(+)(8)} & \text{46.47$\pm$0.34(+)(9)} & 4/0/0 & 9.00 & BO \\ \hline
DNGO & \text{94.54$\pm$0.21(+)(13)} & \text{91.38$\pm$0.18(+)(15)} & \text{72.60$\pm$0.66(+)(16)} & \text{46.19$\pm$0.47(+)(15)} & 4/0/0 & 14.75 & BO \\ \hline
GCN-predictor & \text{94.30$\pm$0.13(+)(18)} & \text{91.02$\pm$0.28(+)(18)} & \text{71.53$\pm$0.72(+)(18)} & \text{45.28$\pm$0.62(+)(19)} & 4/0/0 & 18.25 & R \\ \hline
GP-BO & \text{94.72$\pm$0.22(+)(3)} & \text{91.52$\pm$0.17(+)(11)} & \text{73.35$\pm$0.45(+)(6)} & \text{46.52$\pm$0.44($\approx$)(6)} & 3/1/0 & 6.50 & BO \\ \hline
local-search & \text{94.54$\pm$0.17(+)(12)} & \text{91.52$\pm$0.15(+)(10)} & \text{73.02$\pm$0.77(+)(12)} & \text{46.52$\pm$0.27(+)(7)} & 4/0/0 & 10.25 & - \\ \hline
arch2vec-RL & \text{94.42$\pm$0.18(+)(16)} & \text{91.44$\pm$0.35(+)(13)} & \text{72.99$\pm$0.69(+)(13)} & \text{46.25$\pm$0.34(+)(13)} & 4/0/0 & 13.75 & RL \\ \hline
arch2vec-BO & \text{94.45$\pm$0.13(+)(15)} & \text{91.55$\pm$0.02(+)(4)} & \text{73.47$\pm$0.19($\approx$)(3)} & \text{46.38$\pm$0.03(+)(11)} & 3/1/0 & 8.25 & BO \\ \hline
NPENAS-NP & \text{94.66$\pm$0.14(+)(7)} & \underline{91.58$\pm$0.11(+)(2)} & \text{73.20$\pm$0.59(+)(11)} & \text{46.28$\pm$0.39(+)(12)} & 4/0/0 & 8.00 & EA \\ \hline
NPENAS-SSRL & \underline{94.75$\pm$0.21(+)(2)} & \text{91.53$\pm$0.16(+)(7)} & \text{73.35$\pm$0.40(+)(5)} & \text{46.53$\pm$0.24(+)(5)} & 4/0/0 & 4.75 & EA \\ \hline
NPENAS-SSCCL & \text{94.71$\pm$0.18(+)(4)} & \text{91.55$\pm$0.18(+)(5)} & \underline{73.48$\pm$0.07(+)(2)} & \text{46.59$\pm$0.19(+)(3)} & 4/0/0 & 3.50 & EA \\ \hline
SAENAS-NE & \text{94.61$\pm$0.10(+)(11)} & \text{91.57$\pm$0.14(+)(3)} & \text{73.35$\pm$0.38(+)(7)} & \underline{46.61$\pm$0.11($\approx$)(2)} & 3/1/0 & 5.75 & EA \\ \hline
CAP & \text{94.65$\pm$0.20(+)(8)} & \text{91.54$\pm$0.10(+)(6)} & \text{73.41$\pm$0.17(+)(4)} & \text{46.47$\pm$0.07(+)(8)} & 4/0/0 & 6.50 & EA \\ \hline
DCL-ENAS & \textbf{95.00$\pm$0.10(1)} & \textbf{91.61$\pm$0.00(1)} & \textbf{73.48$\pm$0.06(1)} & \textbf{46.66$\pm$0.13(1)} & NA & 1.00 & EA \\ \hline
\end{tabular}
    }}
\end{table*}

In this subsection, we compare our proposed DCL-ENAS with several NAS. 
We provide a brief description of each of them below.

\begin{itemize}
    \item {Random}: Random search is the simplest yet competitive baseline of NAS algorithms. It randomly selects $fes_{max}$ architectures from the search space and uses the architecture with the highest validation accuracy as the final result.
    \item {REA}~\cite{regularized}: An algorithm with age attributes was proposed, which decides the death of genotypes by age, thus supporting younger individuals. The paper also compared the three image classifier architecture search algorithms of evolutionary algorithms, reinforcement learning, and random search. The research results show that evolutionary algorithms are slightly faster than reinforcement learning in search speed and perform well in situations where resources are scarce.
    \item {BANANAS}~\cite{white2021bananas}: Proposed to combine Bayesian optimization with predictor models for neural architecture search, and introduced a novel path-based encoding scheme. In BANANAS, an integrated feed-forward neural network serves as the predictor model. The variants of BANANAS are BANANAS-AE based on adjacency matrix encoding, BANANAS-PE based on path encoding, and BANANAS-PAPE based on location-awareness, which comes from ~\cite{SSNENAS}.
    \item {BOHAMIANN}~\cite{BOHAMIANN}: This method proposes a general approach to Bayesian optimization using a flexible parametric model (neural network), as close as possible to the true Bayesian process. it achieves scalability through Stochastic Gradient Hamiltonian Monte Carlo, which improves its robustness through scale adaptation. The NAS baseline implementation first appeared in ~\cite{white2021bananas}.
    \item {BONAS}~\cite{BONAS}: This algorithm applies GCN predictor as predictor models for Bayesian optimization, selecting multiple relevant candidate models in each iteration. The candidate models are obtained based on weight sharing and training.
    \item {DNGO}~\cite{DNGO}: In the process of Bayesian optimization, a neural network is used as a basis function to fit the distribution of queries.
    \item {GCN-predictor}~\cite{GCN-predictor}: Random search based on the GCN predictor model.
    \item {GP-BO}: The predictor model is a Gaussian process, and the optimization process is Bayesian. which comes from ~\cite{white2021bananas} and~\cite{Npenas}.
    \item {local-search}~\cite{local-search}: NAS is conducted based on the simplest hill climbing. As the noise decreases, the number of local minima greatly reduces, and theoretical characteristics of local search performance in NAS are provided.
    \item {arch2vec}~\cite{Arch2vec}: Uses reinforcement learning and Bayesian optimization as optimizers to search for the optimal neural architecture, and uses the embedding method arch2vec that adopts a variational autoencoder to learn the representation of neural architectures. The two search methods in the paper are recorded as Arch2vec-RL and Arch2vec-BO.
    \item {NPENAS-NP}~\cite{Npenas}: Designed a graph-based uncertainty estimation network as a predictor model, the second predictor is a graph-based neural network, which directly predicts the performance of the input neural network. 
    \item {NPENAS-SSRL and NPENAS-SSCCL}~\cite{SSNENAS}: NPENAS-SSRL and NPENAS-SSCCL are improvements based on ~\cite{Npenas}, using evolutionary algorithms as optimizers and two self-supervised learning models to pretrain architecture embeddings. One is to learn architecture embeddings by introducing a pretraining task to predict the distance between architectures. The other is to first construct positive samples and negative samples, and then propose a contrastive learning algorithm to learn architecture embeddings.
    \item {SAENAS-NE}~\cite{SAENAS-NE}: proposed an evolutionary algorithm based on a network embedding predictor model. First, it uses an unsupervised learning method (graph2vec) to generate the representation of each architecture, and architectures with more similar structures are closer in the embedding space. Secondly, a new environmental selection method based on the reference population is designed to maintain the diversity of the population in each generation, and a filling criterion dealing with the trade-off between convergence and model uncertainty is proposed to reselect the environment.
    \item {CAP}~\cite{CAP}: A context-aware neural predictor (CAP) based on architectural context information is proposed. The input architecture is encoded as a graph by GIN, and the predictor infers the contextual structure around the nodes in each graph.
\end{itemize}

The NAS results of different algorithms on the validation set of NASBench-101 and NASBench-201 are shown as in \autoref{tab:valid}. 

To comprehensively evaluate the performance differences between DCL-ENAS and multiple NAS baselines across diverse benchmarks, we conducted 100 independent search runs for each algorithm and recorded the best validation accuracy from each run. We then applied the Wilcoxon signed-rank test~\cite{Wilcoxon} to assess the statistical significance of the pairwise performance differences between DCL-ENAS and each baseline on every benchmark (NASBench-101, NASBench-201-CIFAR10-valid, CIFAR100, and ImageNet16-120).
We report the statistical outcomes in the $+/\approx/-$ format, where $+$ denotes that DCL-ENAS performs significantly better than the baseline ($p < 0.05$), $\approx$ indicates no significant difference ($p \geq 0.05$), and $-$ implies significantly worse performance.
For instance, in \autoref{tab:valid}, the row corresponding to SAENAS-NE reads “1/3/0”, indicating that DCL-ENAS significantly outperforms SAENAS-NE on one benchmark, performs comparably on three, and does not underperform on any. Additionally, an entry such as “94.61$\pm$0.10(+)(11)” under the NASBench-101 column signifies that DCL-ENAS significantly outperforms SAENAS-NE (denoted by “(+)”), while “(11)” indicates SAENAS-NE's rank based on the average validation accuracy among all compared methods.

The results show that our algorithm surpasses other algorithms in terms of performance and ranking. Besides our algorithm, the superior performance of NPENAS-SSCCL, NPENAS-SSRL, and SAENAS-NE proved the effectiveness of self-supervised learning and unsupervised methods in neural architecture search. Especially in the ENAS process guided by the neural architecture predictor, the performance of the predictor is crucial. NPENAS-SSCCL and NPENAS-SSRL predictor models focus on predicting the feature extraction performance of a given neural architecture and train the model using the element-level loss function MSE. However, in the neural architecture search, the differences in validation accuracy between the architectures may be minutely small (the accuracies of many architectures even differ by one thousandth of a percent), and there is a bottleneck even if the predictor model has strong feature extraction capabilities. SAENAS-NE extracts features of neural architecture based on unlabeled methods. This type of unlabeled method focuses on graph local information and overlooks global graph information such as isomorphic graph differences. Due to inevitable flaws in manually encoding neural structures directly, BANANAS's performance is somewhat disappointing: it records the position of each operation by assigning a unique index to each node. But in the same network architecture, different index orders may lead to completely different encodings, which is also a common problem with one-hot encoding such as encoding ["conv3×3, conv1×1"] into [100,010] or [010,100]. The difference in the final architecture encoding is significant, and this coding method is not suitable for direct input to the predictor model. Arch2vec's variational autoencoder's assumption of Gaussian distribution cannot be guaranteed, and its predictor model has certain defects. DCL-ENAS avoids the adverse impact of directly inputting one-hot encoding into the neural predictor by learning the Manhattan distance from soft encoding to hard encoding, thereby not only improving the reliability of the predictor model during the self-supervised learning stage, but also being able to identify isomorphic graphs. In addition, instead of learning the performance indicators directly, our predictor model learned the relative ranking in a batch, which not only allows the predictor model to learn the minute differences in validation accuracy but also increases the training data to nearly squared times the original data volume, 
{ making better use of a fixed evaluation budget} and improving the reliability of the predictor model.

\begin{table*}[t]
    \centering
    \caption{After conducting architecture search based on validation accuracy, the performance of its optimal architecture on the test set.}
    \resizebox{1\linewidth}{!}{
    \label{tab:test} {
        \begin{tabular}{|c|c|c|c|c|c|c|c|}
\hline Benchmark & NASBench-101 & NASBench-201-CIFAR10-valid & NASBench-201-CIFAR100 & NASBench-201-ImageNet16-120 & +/$\approx$/- & Average Rank & Optimized type \\ \hline
Budget & 150 & 100 & 100 & 100 & - & - & - \\ \hline
Random & \text{93.58$\pm$0.21(+)(19)} & \text{93.75$\pm$0.19(+)(19)} & \text{71.38$\pm$0.66(+)(19)} & \text{45.35$\pm$0.63(+)(18)} & 4/0/0 & 18.75 & R \\ \hline
REA & \text{93.67$\pm$0.23(+)(17)} & \text{94.10$\pm$0.29(+)(14)} & \text{72.82$\pm$0.87(+)(14)} & \text{46.11$\pm$0.56(+)(16)} & 4/0/0 & 15.25 & EA \\ \hline
BANANAS-PE & \text{94.12$\pm$0.16(+)(6)} & \text{94.05$\pm$0.37(+)(15)} & \text{72.41$\pm$1.00(+)(17)} & \text{46.17$\pm$0.52($\approx$)(14)} & 3/1/0 & 13.00 & BO \\ \hline
BANANAS-AE & \text{94.10$\pm$0.19(+)(9)} & \text{94.26$\pm$0.23(+)(10)} & \text{73.16$\pm$0.75(+)(10)} & \text{46.44$\pm$0.22($\approx$)(4)} & 3/1/0 & 8.25 & BO \\ \hline
BANANAS-PAPE & \text{94.12$\pm$0.15(+)(5)} & \text{94.23$\pm$0.28(+)(12)} & \text{73.20$\pm$0.65(+)(9)} & \text{46.30$\pm$0.35($\approx$)(10)} & 3/1/0 & 9.00 & BO \\ \hline
BOHAMIANN & \text{93.94$\pm$0.20(+)(13)} & \text{94.03$\pm$0.25(+)(17)} & \text{72.62$\pm$0.72(+)(15)} & \text{46.06$\pm$0.52(+)(17)} & 4/0/0 & 15.50 & BO \\ \hline
BOGCN-NAS & \text{94.05$\pm$0.21(+)(11)} & \text{94.27$\pm$0.19(+)(9)} & \text{73.27$\pm$0.42(+)(8)} & \text{46.35$\pm$0.39($\approx$)(8)} & 3/1/0 & 9.00 & BO \\ \hline
DNGO & \text{93.89$\pm$0.27(+)(14)} & \text{94.05$\pm$0.23(+)(16)} & \text{72.53$\pm$0.67(+)(16)} & \text{46.21$\pm$0.47(+)(12)} & 4/0/0 & 14.50 & BO \\ \hline
GCN-predictor & \text{93.64$\pm$0.19(+)(18)} & \text{93.80$\pm$0.27(+)(18)} & \text{71.52$\pm$0.73(+)(18)} & \text{45.29$\pm$0.70(+)(19)} & 4/0/0 & 18.25 & R \\ \hline
GP-BO & \text{94.11$\pm$0.17(+)(7)} & \text{94.21$\pm$0.29(+)(13)} & \text{73.35$\pm$0.49(+)(5)} & \text{46.32$\pm$0.39($\approx$)(9)} & 3/1/0 & 8.50 & BO \\ \hline
local-search & \text{94.00$\pm$0.23(+)(12)} & \text{94.28$\pm$0.22(+)(7)} & \text{72.98$\pm$0.79(+)(13)} & \text{46.36$\pm$0.28(+)(7)} & 4/0/0 & 9.75 & - \\ \hline
arch2vec-RL & \text{93.76$\pm$0.16(+)(16)} & \text{94.25$\pm$0.31(+)(11)} & \text{73.03$\pm$0.71(+)(12)} & \text{46.15$\pm$0.40(+)(15)} & 4/0/0 & 13.50 & RL \\ \hline
arch2vec-BO & \text{93.84$\pm$0.17(+)(15)} & \underline{94.36$\pm$0.02(+)(2)} & \text{73.48$\pm$0.13($\approx$)(3)} & \textbf{46.55$\pm$0.23($\approx$)(1)} & 2/2/0 & 5.25 & BO \\ \hline
NPENAS-NP & \text{94.11$\pm$0.16(+)(8)} & \text{94.33$\pm$0.15(+)(5)} & \text{73.16$\pm$0.63(+)(11)} & \text{46.23$\pm$0.45($\approx$)(11)} & 3/1/0 & 8.75 & EA \\ \hline
NPENAS-SSRL & \text{94.13$\pm$0.15($\approx$)(4)} & \text{94.27$\pm$0.22(+)(8)} & \text{73.34$\pm$0.46(+)(6)} & \text{46.43$\pm$0.24($\approx$)(5)} & 2/2/0 & 5.75 & EA \\ \hline
NPENAS-SSCCL & \text{94.15$\pm$0.15($\approx$)(3)} & \text{94.30$\pm$0.22(+)(6)} & \underline{73.49$\pm$0.07($\approx$)(2)} & \text{46.19$\pm$0.48($\approx$)(13)} & 1/3/0 & 6.00 & EA \\ \hline
SAENAS-NE & \text{94.08$\pm$0.16(+)(10)} & \text{94.33$\pm$0.15(+)(4)} & \text{73.30$\pm$0.53(+)(7)} & \text{46.42$\pm$0.25($\approx$)(6)} & 3/1/0 & 6.75 & EA \\ \hline
CAP & \underline{94.18$\pm$0.16($\approx$)(2)} & \text{94.34$\pm$0.06(+)(3)} & \text{73.41$\pm$0.22($\approx$)(4)} & \text{46.44$\pm$0.36($\approx$)(3)} & 1/3/0 & 3.00 & EA \\ \hline
DCL-ENAS& \textbf{94.20$\pm$0.05(1)} & \textbf{94.37$\pm$0.00(1)} & \textbf{73.49$\pm$0.06(1)} & \underline{46.45$\pm$0.15(2)} & NA & 1.25 & EA \\ \hline
\end{tabular}
    }
    }
\end{table*}
\begin{figure}[t]
    \centering
    \includegraphics[width=0.6\textwidth]{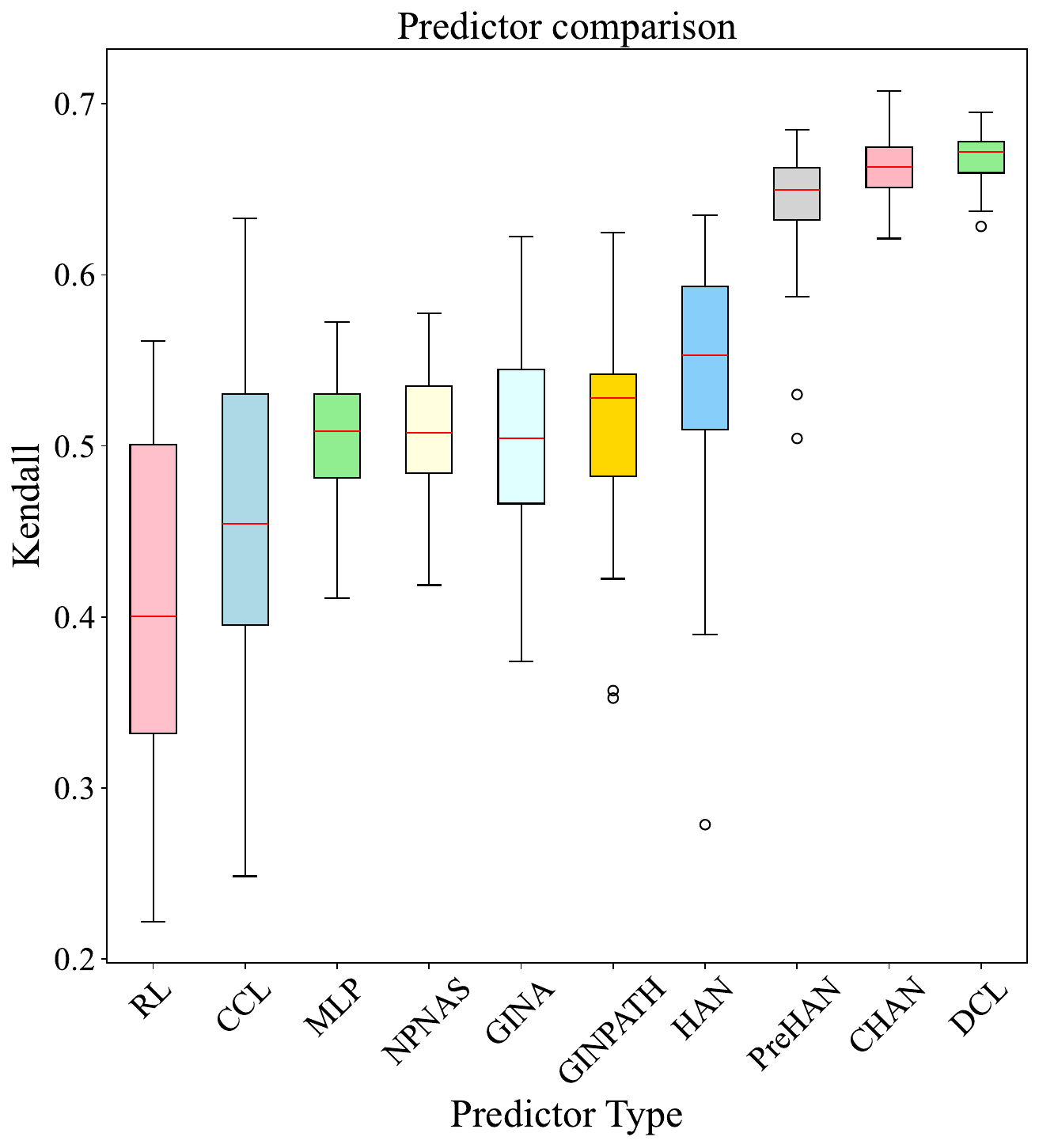}
    \caption{Predictor comparison: Our proposed predictor model HAN outperforms RL, CCL, MLP, and NPENAS on NASBench-201 dataset. HAN shows superior Kendall correlation without pretraining or contrast learning, indicating its advantage in neural network embedding representation. PreHAN, CHAN, and DCL variants demonstrate varying performance levels based on pretraining and fine-tuning strategies, with DCL achieving a Kendall correlation above 0.6.}
    \label{fig:predictor_compare}
\end{figure}
After the algorithm searches for the optimal architecture according to the verification accuracy on different benchmarks, the best architecture's accuracy on the test set is compared in \autoref{tab:test}. Overall, our algorithm is still the best in terms of average ranking. In the test accuracy rate performance on the NASBench-201-ImageNet16-120 benchmark, arch2vec-BO is slightly better. This is because the architecture that performs well on the test set may not necessarily be the best on the validation set, which has been discussed in ~\cite{GCN-predictor} and ~\cite{Npenas}. 

\subsection{In-depth Analysis}
\subsubsection{Predictor comparison}
In this section, we compare the performance of typical predictor models in the above algorithms. We denote the neural predictors commonly used in algorithms such as BANANAS, BOHAMIANN, DNGO, etc., as MLP. We denote the neural predictors in NPENAS-SSRL, NPENAS-SSCCL, and NPENAS-NP algorithms as RL, CCL, and NPENAS, respectively. For fairness, the training set randomly samples 100 data from cifar10-valid of NASBench-201, and the test set is all the remaining data. We have calculated the Kendall-$\tau$ correlation of different predictors on the test set. As shown in \autoref{fig:predictor_compare}, our predictor model HAN (Hierarchical Attention Network) has far exceeded other predictor models in Kendall-$\tau$ correlation without pretraining and contrast learning. This indicates that the predictor model we proposed has an advantage in the representation of neural network embedding. PreHAN is our predictor model after pretraining, but it does not use contrast learning for fine-tuning during fine-tuning. CHAN is a HAN model trained with contrast learning, but it has not been pretrained. DCL is the predictor model in this paper. After pretraining and fine-tuning, the Kendall-$\tau$ correlation is above 0.6.

\subsubsection{{ Effectiveness under limited compute budget}}
  \begin{table}[htbp]
    \centering
    \caption{Kendall's $\tau$ ranking correlation on NASBench-101 and NASBench-201 (CIFAR10-valid) under different
    label (i.e., compute) budgets. For each method, the neural architecture predictor is run independently ten times and the mean $\tau$ is reported; ‘all’ denotes using the entire test set.}
    \label{tab:nb101_nb201_kt}
    \resizebox{0.8\textwidth}{!}{
    \begin{tabular}{c|ccccc|ccccc}
        \hline
        & \multicolumn{5}{c|}{NASBench-101} & \multicolumn{5}{c}{NASBench-201 (CIFAR10-valid)} \\
        \cline{2-11}
        Train label budget & 100 & 172 & 424 & 424 & 4236 & 78 & 156 & 469 & 781 & 1563 \\
        Validation label budget & 200 & 200 & 200 & 200 & 200 & 200 & 200 & 200 & 200 & 200 \\
        Test set size & all & all & 100 & all & all & all & all & all & all & all \\
        \hline
        NPENAS   & 0.391 & 0.545 & 0.710 & 0.679 & 0.769 & 0.343 & 0.413 & 0.584 & 0.634 & 0.646 \\
        NAO      & 0.501 & 0.566 & 0.704 & 0.666 & 0.775 & 0.467 & 0.493 & 0.470 & 0.522 & 0.526 \\
        Arch2Vec & 0.435 & 0.511 & 0.561 & 0.547 & 0.596 & 0.542 & 0.573 & 0.601 & 0.606 & 0.605 \\
        SAENAS-NE& 0.626 & 0.687 & 0.778 & 0.733 & 0.816 & 0.565 & 0.594 & 0.642 & 0.690 & 0.726 \\
        CAP      & 0.656 & 0.709 & 0.791 & 0.758 & 0.831 & 0.600 & 0.684 & 0.755 & 0.776 & 0.815 \\
        \hline
        DCL-ENAS & 0.665 & 0.710 & 0.796 & 0.759 & 0.835 & 0.610 & 0.691 & 0.757 & 0.777 & 0.819 \\
        \hline
    \end{tabular}
    }
\end{table}
To evaluate robustness under { limited compute budgets}, we conduct controlled \emph{label-budget} experiments on NASBench-101 and NASBench-201 (cifar10-valid). Here, a ``label'' refers to { one fully trained architecture–-performance pair used to train the predictor, so the label budget is a direct proxy for compute budget}. For NASBench-101, the training label budgets are set to $\{100,172,424,424,4236\}$; for NASBench-201, they are set to $\{78,156,469,781,1563\}$. In all cases, the validation set size is fixed to $200$, while the test set is either a held-out subset ($100$) or the full set (\texttt{all}), as summarized in Table~\ref{tab:nb101_nb201_kt}. Each method is independently run ten times with different random seeds, and we report the mean Kendall's $\tau$ between the predicted ranking and the ground-truth ranking.

As shown in Table~\ref{tab:nb101_nb201_kt}, DCL-ENAS achieves the best or tied-best Kendall's $\tau$ across all label budgets on both benchmarks. In the most { smallest compute budgets} settings, DCL-ENAS remains competitive: for example, on NASBench-201 with only $78$ labels, it attains $\tau{=}0.610$ (vs.\ the strongest baseline CAP at $0.600$); on NASBench-101 with only $100$ labels, it reaches $\tau{=}0.665$ (vs.\ CAP at $0.656$). As the label budget increases, all methods improve, yet DCL-ENAS consistently maintains a stable advantage. These results demonstrate that our predictor remains highly reliable even under extremely limited supervision, and continues to improve as more labels become available.

We further evaluate the method in full NAS runs by progressively increasing the number of evaluated architectures (i.e., the compute budget) and tracking the best validation accuracy achieved so far. Figure~\ref{fig:convergence} presents results on NASBench-101 and NASBench-201 (CIFAR10-valid, CIFAR100, ImageNet16-120). Across all four benchmarks, DCL-ENAS achieves higher validation accuracy under the same compute budget. The persistent advantage under low and medium budgets indicates that DCL-ENAS can identify high-quality architectures with fewer architecture--performance pairs (i.e., label data).

\begin{figure}[htbp]
    \centering
    \subfloat[NASBench-101\label{fig:nb101_converg}]{
        \includegraphics[width=0.5\textwidth]{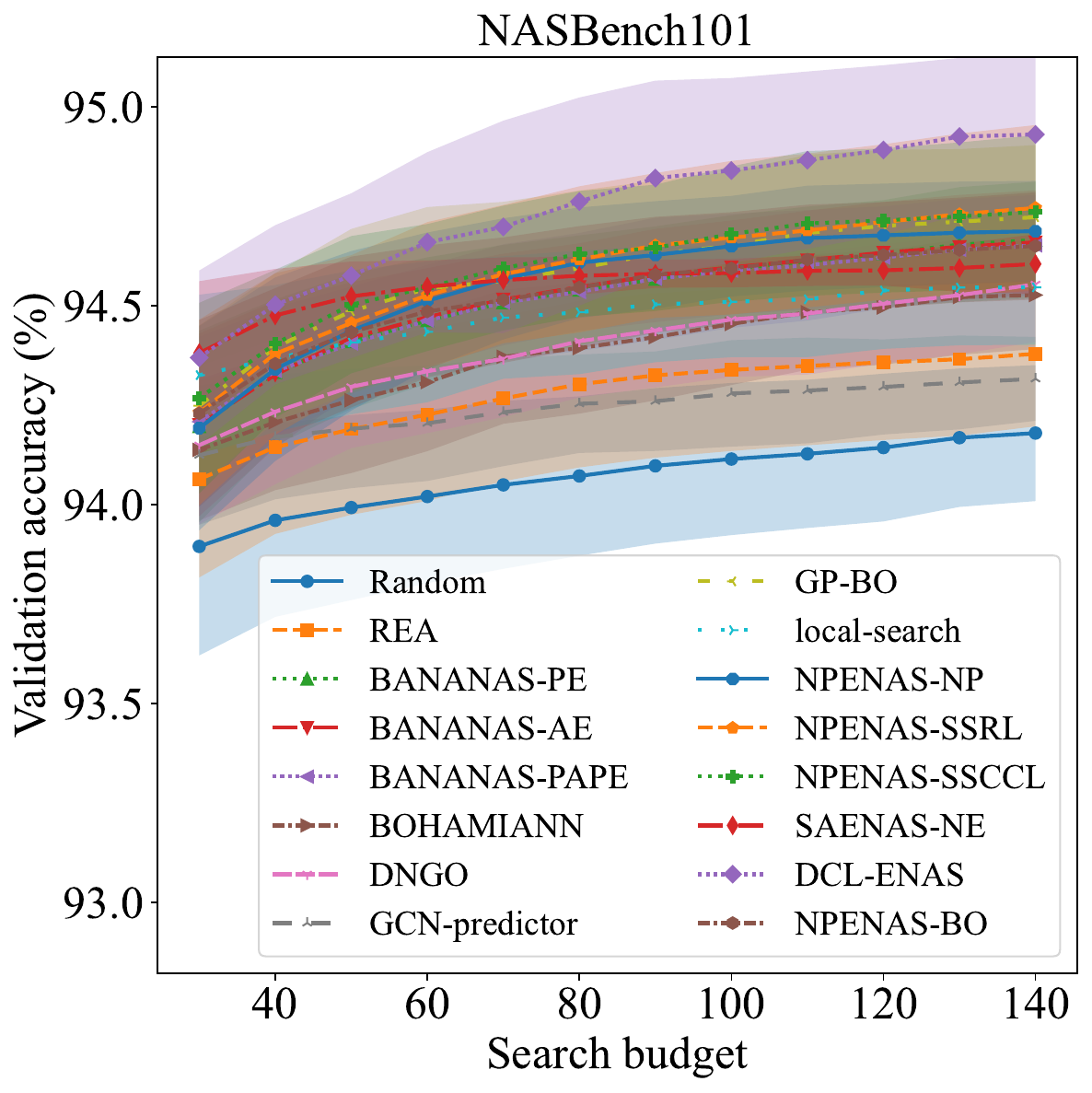}
    }
    \subfloat[NASBench-201-CIFAR10\label{fig:nb201-cifar10_converg}]{
        \includegraphics[width=0.5\textwidth]{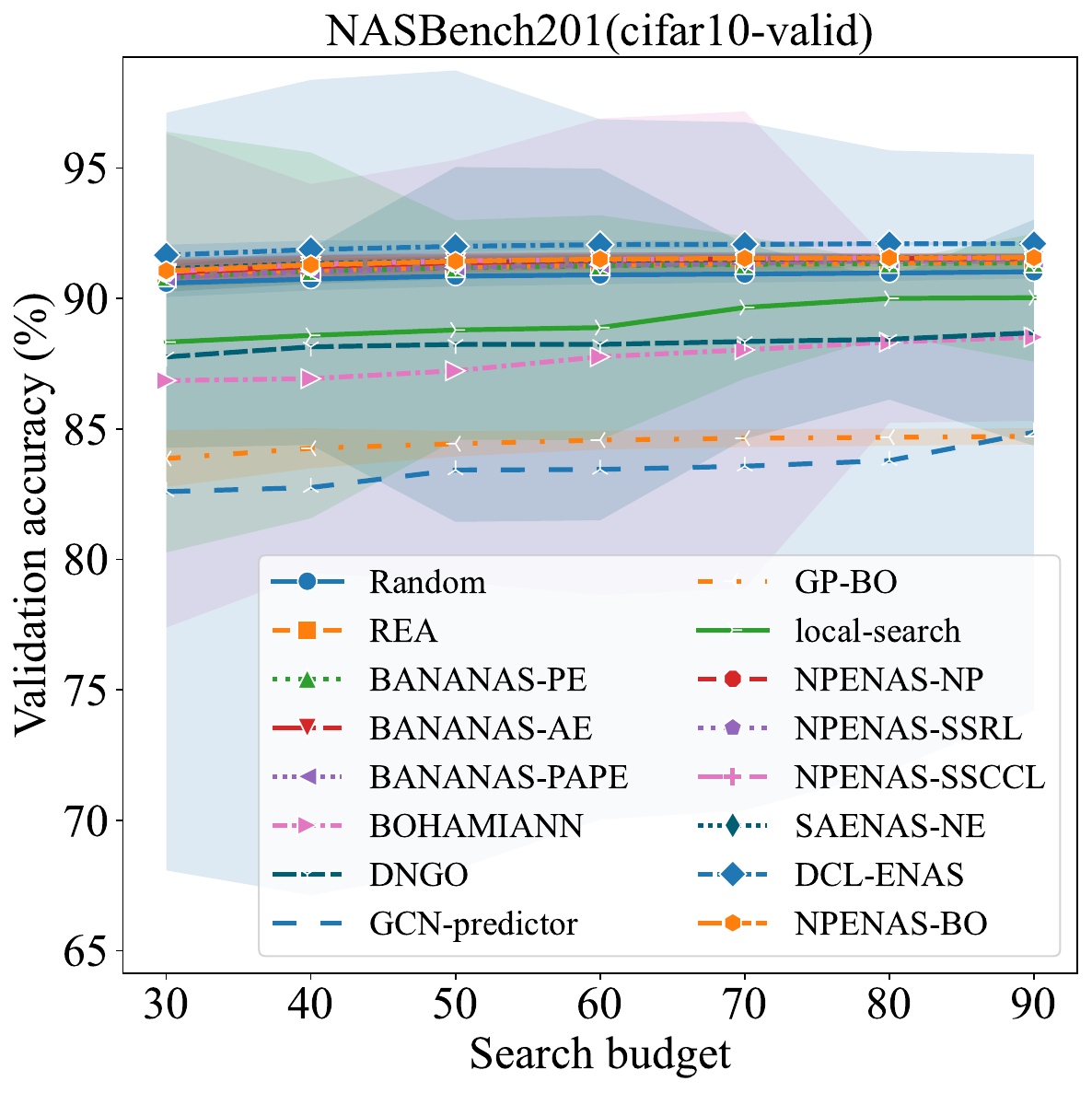}
    }\\
    \subfloat[NASBench-201-CIFAR100\label{fig:nb201-cifar100_converg}]{
        \includegraphics[width=0.5\textwidth]{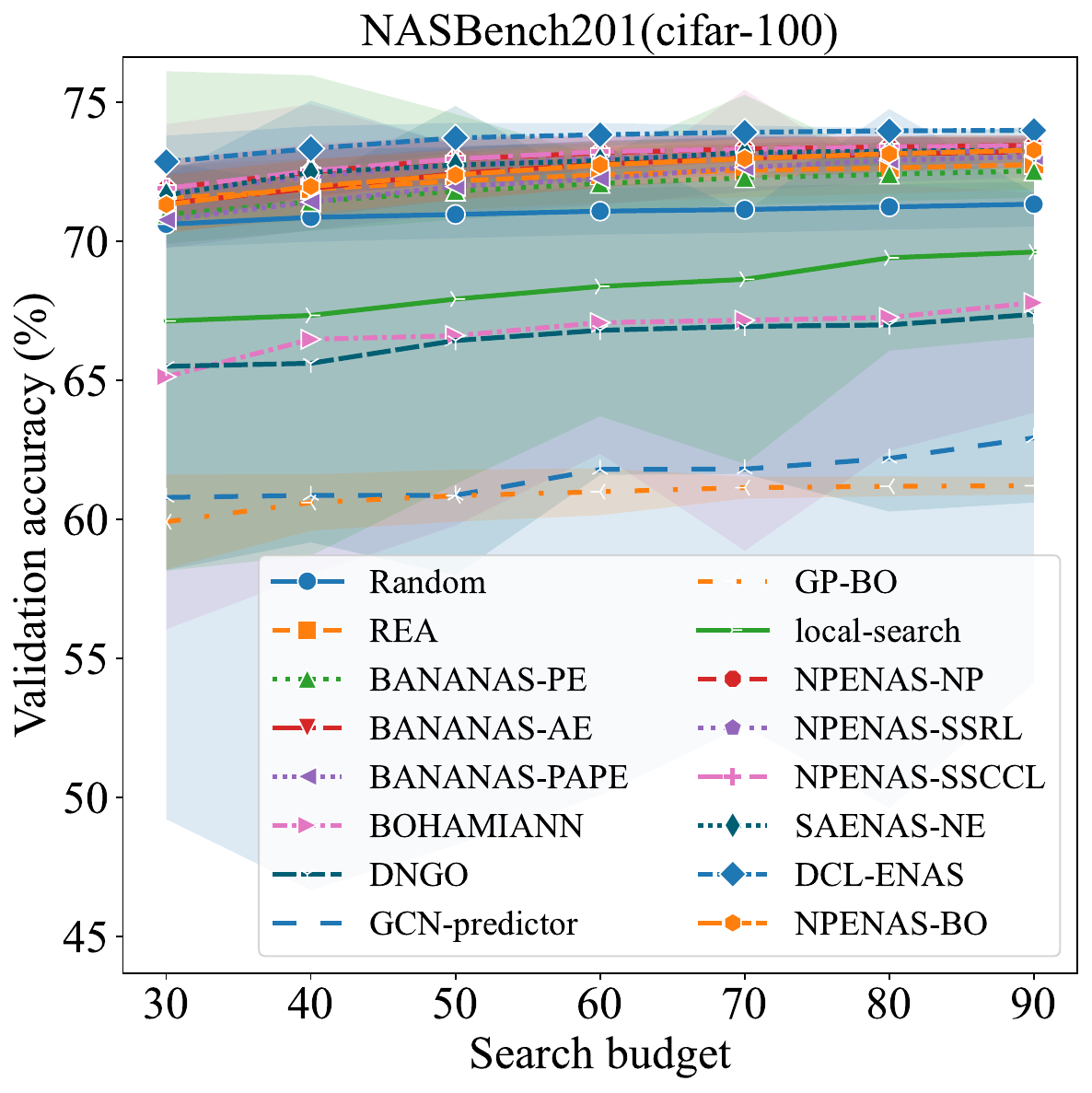}
    }
    \subfloat[{ NASBench-201-ImageNet16-120}\label{fig:nb201-image120_converg}]{
        \includegraphics[width=0.5\textwidth]{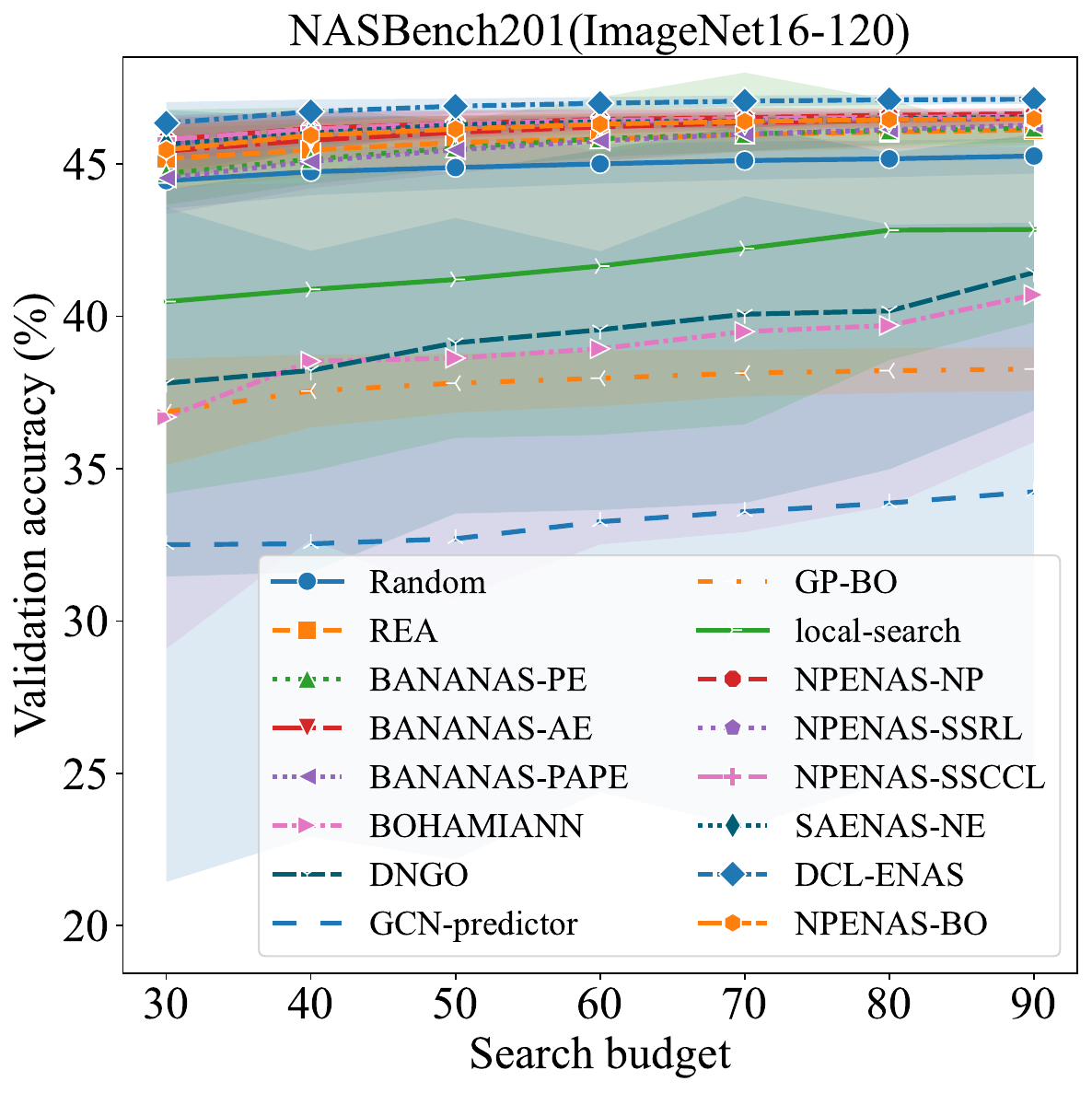}
    }\\
     \caption{
     Validation accuracy curves (Top-1 accuracy on the validation set) of the best architectures obtained by different algorithms on four benchmarks with increasing compute budgets. Since predictor-based NAS algorithms require sampling a subset of architectures to train the predictor before the search starts, the x-axis in the figure does not start from zero.}
    \label{fig:convergence}
\end{figure}

\subsubsection{Ablation Study}
\begin{figure}[htbp]
    \centering
    \subfloat[NASBench-101\label{fig:nb101}]{
        \includegraphics[width=0.5\textwidth]{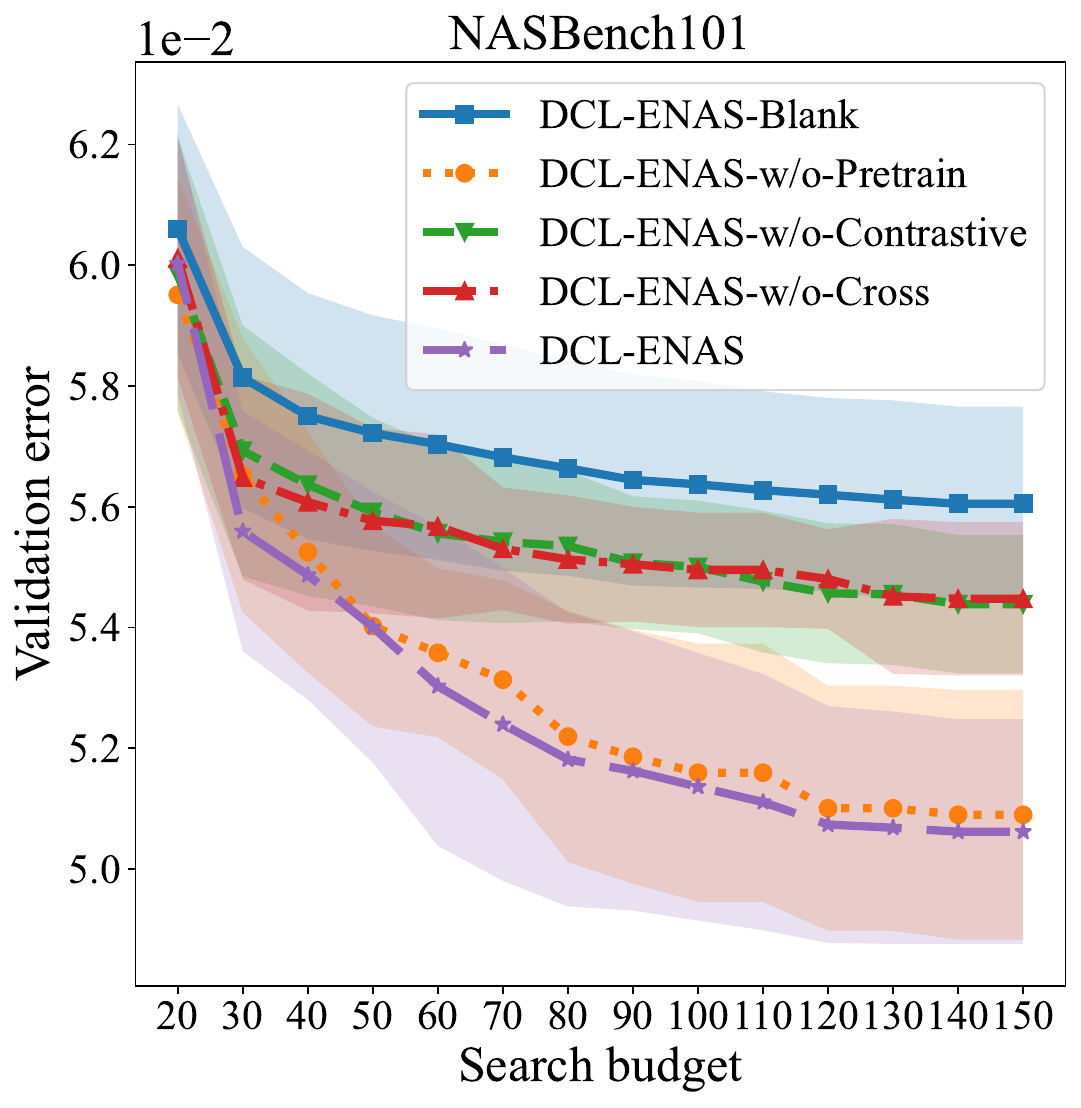}
    }
    \subfloat[NASBench-201-CIFAR10\label{fig:nb201-cifar10}]{
        \includegraphics[width=0.5\textwidth]{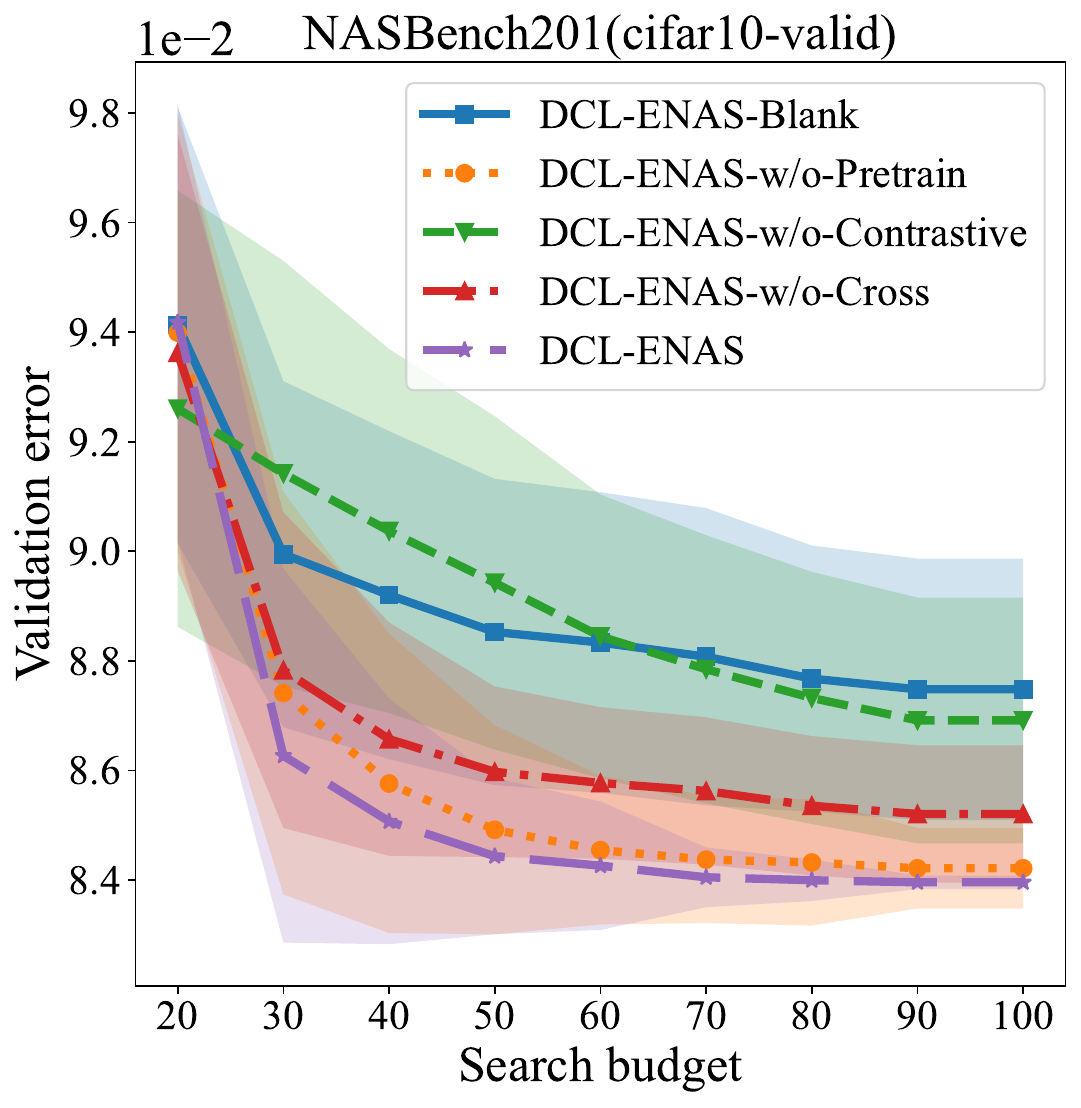}
    }\\
    \subfloat[NASBench-201-CIFAR100\label{fig:nb201-cifar100}]{
        \includegraphics[width=0.5\textwidth]{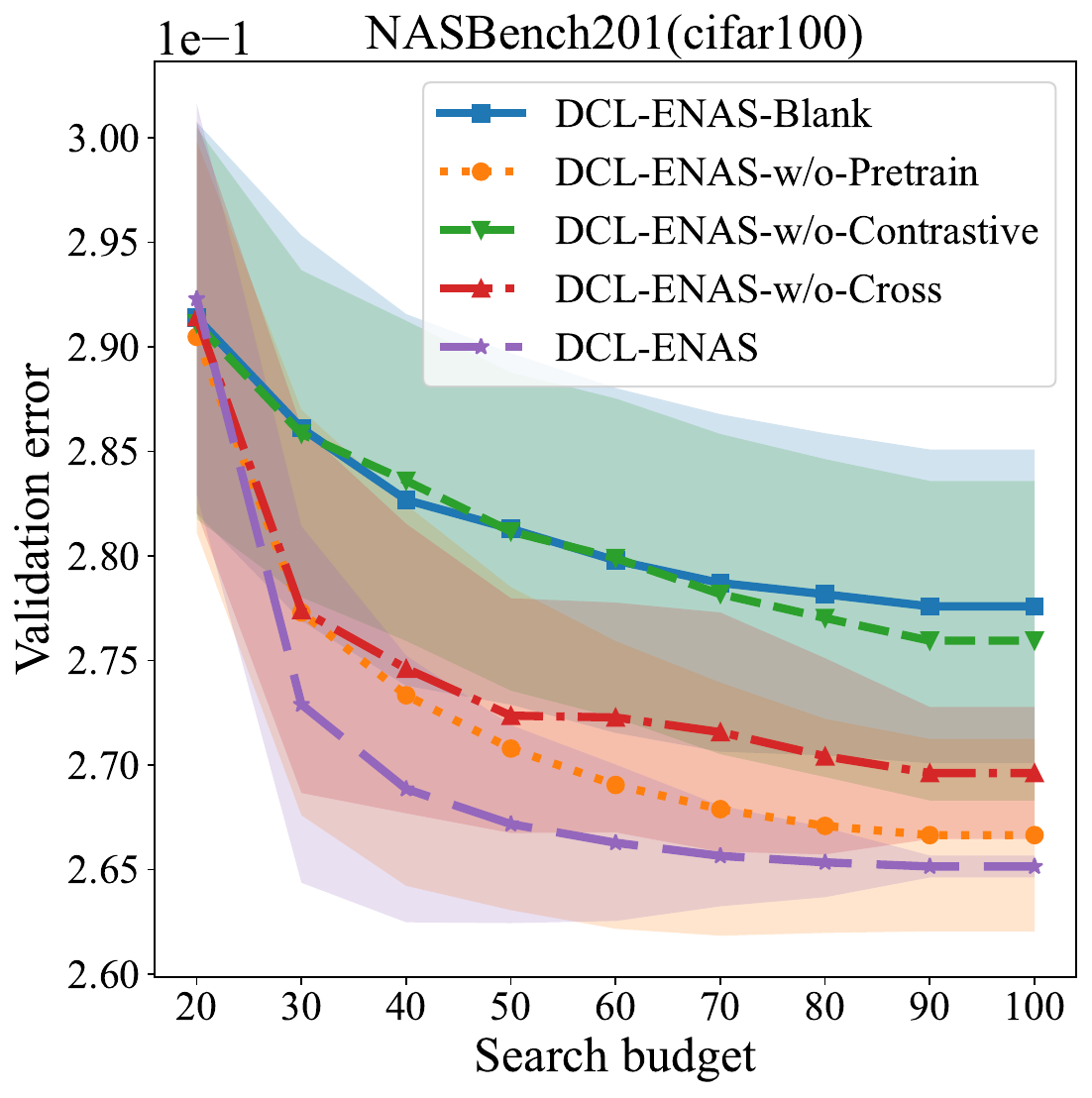}
    }
    \subfloat[NASBench-201-ImageNet16-120\label{fig:nb201-image120}]{
        \includegraphics[width=0.5\textwidth]{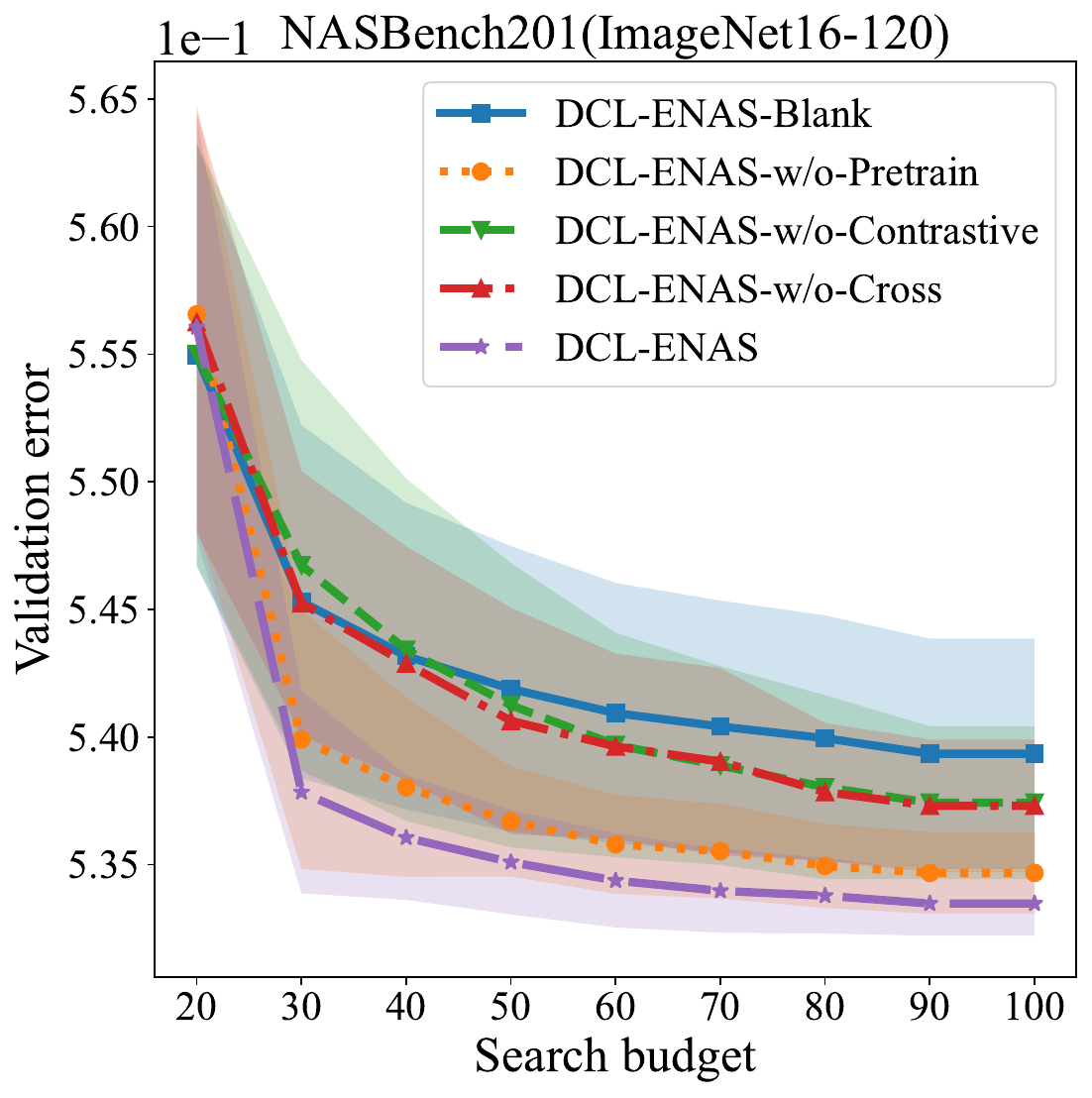}
    }\\
 \caption{Comparison of DCL-ENAS variants across different benchmarks.
    Val Error is the validation classification error of the searched architecture, computed as \(1-\text{validation accuracy}\).}
    \label{fig:ablation}
\end{figure}

In this section, we studied the contributions of the main components proposed in DCL-ENAS. We compared DCL-ENAS with several variants, which are described as follows.
\begin{itemize}
    \item {DCL-ENAS-Blank}: This variant does not have a self-supervised pre-training process. During the search phase, the fine-tuning of the predictor model is a regular regression model, the loss function is MSE, and it is not the ranking-based contrastive loss designed in this paper. In addition, the evolutionary operator does not have a crossover operation.
    \item {DCL-ENAS-w/o-Pretrain}: This variant does not have a self-supervised pre-training process.
    \item {DCL-ENAS-w/o-Contrastive}: This variant replaces the loss during fine-tuning with MSE.
    \item {DCL-ENAS-w/o-Cross}: This variant removes the crossover operator during the evolution process.
\end{itemize}
The experimental results are shown in \autoref{fig:ablation}, and our discussion of the experimental results is as follows:
\begin{itemize}
    \item DCL-ENAS outperforms other variants, indicating that we have learned an effective representation of the neural network architecture using the self-supervised pre-training method. During the evolution process, the neural architecture predictor is fine-tuned in a contrastive learning manner, enhancing the reliability of the predictor model. The crossover operation effectively explores other neural architectures in the search space.
    \item DCL-ENAS-w/o-Pretrain performs weaker than DCL-ENAS, reflecting that the pre-training process can learn the representation of the neural architecture from unlabeled samples, which has a certain significance for improving the reliability of the predictor model.
    \item DCL-ENAS-w/o-Contrastive performs significantly worse than DCL-ENAS, indicating that the reliability of the predictor model is crucial to the entire search process. The predictor model in this paper increases the training data volume to $n(n-1)$ times the original data volume through contrastive fine-tuning, guiding the search process to evolve in the correct direction.
    \item DCL-ENAS-w/o-Cross performs worse than DCL-ENAS, also indicating that the crossover operator has a promoting effect on the evolution process.
    \item DCL-ENAS-Blank performs the worst as expected.
\end{itemize}

\section{Application Study: ECG Time-Series Classification}
\label{sec:application_study}

\begin{figure*}[htbp]
\centering
    \subfloat[Examples of Four ECG Classes. \label{fig:ECG}]{
    \includegraphics[height=0.5\textwidth,keepaspectratio]{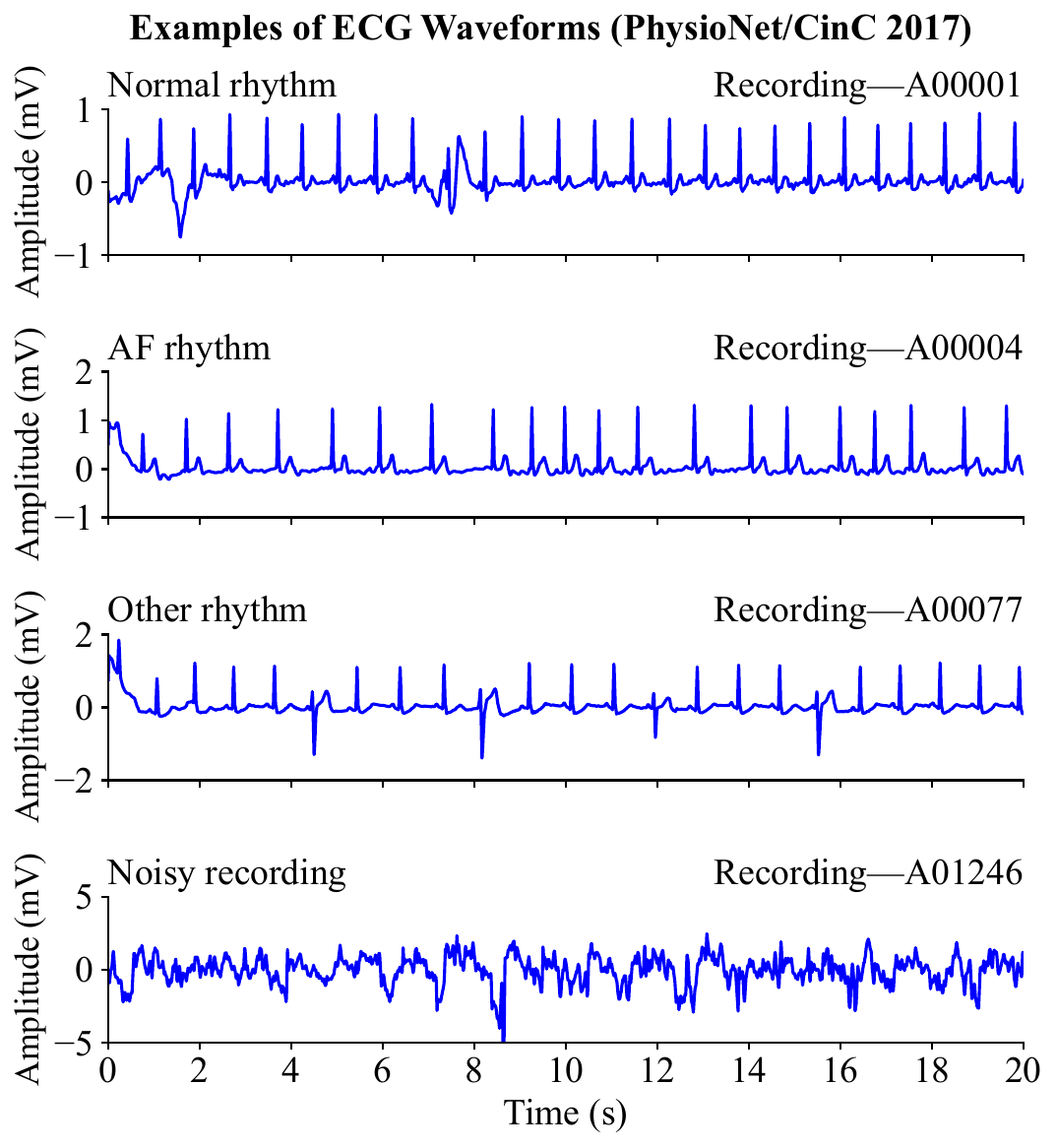}
    }
    \subfloat[Impact of Training Data Reduction on NAS. \label{fig:cut_data}]{
    \includegraphics[height=0.5\textwidth,keepaspectratio]{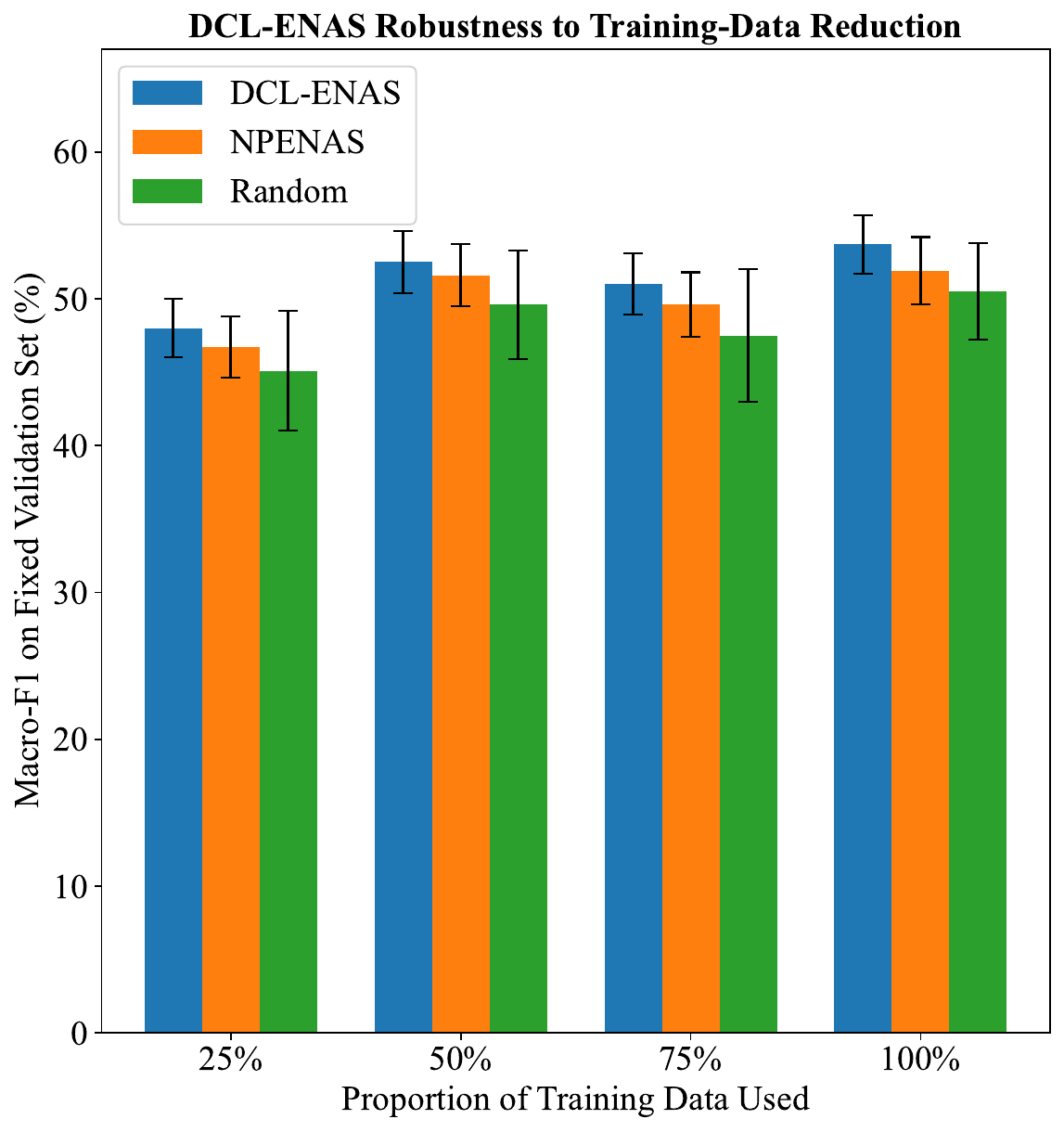}
    }\\
    \subfloat[Discovered Normal Cell. \label{fig:normal_cell}]{
    \includegraphics[width=1\textwidth]{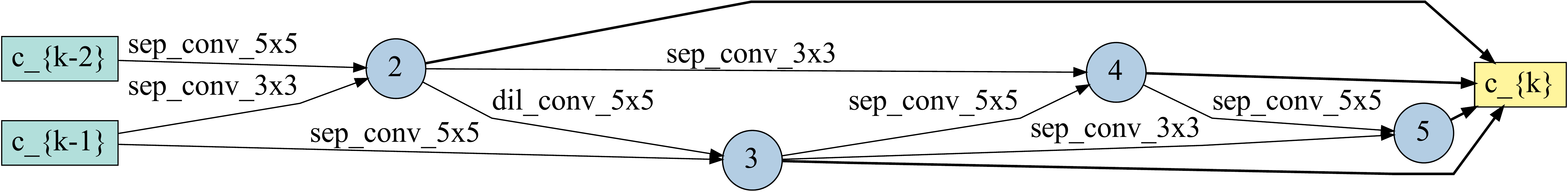}
    }\\
    \subfloat[Discovered Reduction Cell. \label{fig:reduce_cell}]{
    \includegraphics[width=1\textwidth]{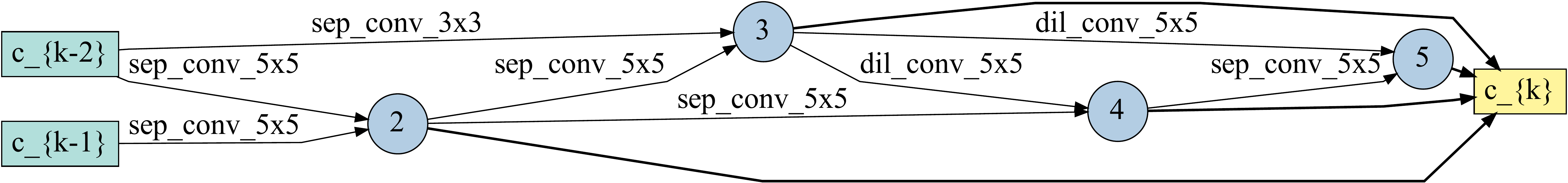}
    }
\caption{Application study: ECG arrhythmia classification using DCL-ENAS.}
\label{fig:application_study}
\end{figure*}

To accommodate the structural characteristics of one-dimensional biomedical time-series data, we manually design a compact yet expressive 1-D CNN neural architecture search space, drawing inspiration from the GAEA PC-DARTS paradigm~\cite{1_D_DARTS}. This search space is tailored to efficiently capture temporal features while maintaining structural parsimony, and is applied to the task of arrhythmia classification on electrocardiogram (ECG) signals.

\subsection{Search Space Design}

Our design adheres to the widely adopted ``Stem + Cells'' paradigm. As illustrated in Table~\ref{tab:1-D}, the input sequence is first projected into a high-dimensional latent space via a single 1-D convolutional layer followed by batch normalization (BN), minimizing information loss in the early stage. Subsequently, the backbone is constructed by stacking eight sequential cells. Reduction Cells are inserted at one-third and two-thirds of the depth to halve the temporal resolution and double the channel width, while the remaining cells are Normal Cells that preserve resolution. The network ends with a global average pooling (GAP) layer followed by a fully connected (FC) classifier, making it suitable for multi-class or multi-label tasks.

By fixing the network depth and downsampling stages, the search is focused on edge connections and primitive selections within cells---factors that significantly impact final performance. Each cell is modeled as a DAG with two input states and four intermediate nodes. Each intermediate node aggregates inputs from its predecessors through edge-level mixed operations. The outputs of the four intermediate nodes are concatenated along the channel dimension to form the cell output. Compared to element-wise addition, concatenation preserves feature diversity and facilitates evolutionary selection pressure.

Each edge operation is selected from a set of eight mutually exclusive primitives: 
\texttt{\{none, \allowbreak max\_pool\_3x3,\allowbreak
avg\_pool\_3x3, \allowbreak skip\_connect, \allowbreak sep\_conv\_3x3, \allowbreak sep\_conv\_5x5,\allowbreak dil\_conv\_3x3, \allowbreak dil\_conv\_5x5\}}.
All convolutional operations are implemented using depthwise separable convolutions to significantly reduce FLOPs. The inclusion of pooling and skip connections offers low-cost transformation paths, enabling flexible control over network depth without excessive parameter stacking.

Both Normal and Reduction Cells follow a 4-step DAG structure, corresponding to intermediate nodes 2 through 5. At step $k$, the new node receives input from $k+2$ preceding states. In Normal Cells, all edges use stride 1 to maintain temporal length. In Reduction Cells, downsampling is achieved by setting the stride of the first two edges in each step to 2, with the remainder at stride 1. This ensures temporal halving and channel doubling via grouped channels and \texttt{FactorizedReduce}. While \cite{1_D_DARTS} targets general architectures, we specifically tailor the design into a one-dimensional CNN search space. The remaining details remain consistent with the original paper.

\begin{table}[htbp]
\centering
\caption{Backbone Configuration Example (Layers = 8), $bn$ = batch size; $Len$ = temporal length.}
\label{tab:1-D}
\resizebox{1\linewidth}{!}{
\begin{tabular}{c|p{0.45\linewidth}|l|c|c}
\hline
Stage & Operation Set & Input $\to$ Output Shape & Searchable & \#Params \\
\hline
Stem & Conv $3{\times}1$ + BN & $(bn, 1, Len)\!\to\!(bn, 48, Len)$ & No & 0.24 k \\
Cell-0 (Normal) & sep\_conv3×3/5×5, dil\_conv 3×3/5×5, max/avg pool3×3, skip, none & $(bn, 48, Len)\!\to\!(bn, 64, Len)$ & Yes & 10 k \\
Cell-1 (Normal) & Same as above & $(bn, 64, Len)\!\to\!(bn, 64, Len)$ & Yes & 10 k \\
Cell-2 (Reduction) & same primitives + stride-2 subsample & $(bn, 64, Len)\!\to\!(bn, 128, Len/2)$ & Yes & 30 k \\
Cell-3 (Normal) & Same as above & $(bn, 128, Len/2)\!\to\!(bn, 128, Len/2)$ & Yes & 28 k \\
Cell-4 (Normal) & Same as above & $(bn, 128, Len/2)\!\to\!(bn, 128, Len/2)$ & Yes & 28 k \\
Cell-5 (Reduction) & same primitives + stride-2 subsample & $(bn, 128, Len/2)\!\to\!(bn, 256, Len/4)$ & Yes & 38 k \\
Cell-6 (Normal) & Same as above & $(bn, 256, Len/4)\!\to\!(bn, 256, Len/4)$ & Yes & 36 k \\
Cell-7 (Normal) & Same as above & $(bn, 256, Len/4)\!\to\!(bn, 256, Len/4)$ & Yes & 28 k \\
GAP + FC & Adaptive AvgPool + Linear & $(bn, 256, 1)\!\to\!(bn, 4)$ & No & 1 k \\
\hline
\end{tabular}}
\end{table}

When porting DCL\mbox{-}ENAS from the single-cell spaces of NASBench-101/201 to the dual-cell 1-D DARTS space, we keep the entire two-stage workflow—\emph{CLP pre-training followed by CLF-ENAS search}—as well as the original contrastive losses and evaluation budget.  The only structural change is that each architecture now contains a Normal Cell and a Reduction Cell; their adjacency matrices are placed on the diagonal of a block-diagonal matrix:
      $$
        \mathbf{A}=\begin{bmatrix}
                     \mathbf{A}_{\text{Norm}} & \mathbf{0}\\
                     \mathbf{0} & \mathbf{A}_{\text{Redu}}
                   \end{bmatrix},
      $$
and every node feature receives an extra one-hot \texttt{Cell-ID}.  This allows the GIN encoder to process both cells as one enlarged graph, with the sole consequence of a wider input dimension.  For path encoding, we concatenate \texttt{Cell-ID} and \texttt{Path-ID} to create a global index $\text{GlobalID}=\texttt{Cell-ID}\!\times\!P+\texttt{Path-ID}$. Here, \(P\) denotes the total number of unique (non-duplicate) paths available to a \emph{single} cell in the search space (after deduplication of path equivalences). For example, if the Normal Cell contributes \(P=1000\) unique paths, the Reduction Cell adopts an index offset of \(P\) and its path numbering starts at \(P+1=1001\); equivalently, the Normal Cell occupies indices \([1,1000]\) and the Reduction Cell \([1001,2000]\). The path table therefore grows from roughly $10^{3}$ entries to about $2{\times}10^{4}$, but the one-hot-per-path rule, fixed template and Manhattan-distance permutation invariance remain fully valid, and the Transformer path-attention simply doubles its maximum sequence length from $L_{\text{seq}}$ to $2L_{\text{seq}}$.
The evolutionary operators are unchanged except that crossover is applied within the same cell type to preserve tensor compatibility, so all hyper-parameters $(P_{\mathrm c},P_{\mathrm m},P_{\mathrm{keep}})$ are reused verbatim.  Finally, predictor fine-tuning still relies on the pairwise ranking loss; the only difference is that the target metric switches from \emph{Validation Accuracy} on image data to \emph{Macro-F1} on ECG data, a change that affects the labels but not the loss formulation.  In short, aside from expanding the adjacency matrix and doubling the path table, every core component and contrastive-learning strategy of DCL-ENAS carries over unchanged to the 1-D DARTS setting.%

\subsection{Experimental Setup and Data Augmentation}

We evaluate DCL-ENAS on the public PhysioNet 2017 Challenge dataset~\cite{ECG}, which involves a four-class classification task using single-lead ECG recordings. The dataset contains 8,528 records with durations ranging from 9 to 61 seconds (mean: 32.5s, std: 10.9s) sampled at 300Hz using AliveCor Kardia devices. Each record is manually annotated by cardiologists into one of four mutually exclusive categories: normal rhythm ($n$ = 5,154), atrial fibrillation (AF)($n$ = 771), other rhythm ($n$ = 2,557), and noisy signals ($n$ = 46). Here, $n$ denotes the number of records in each category. Examples are shown in Fig.~\ref{fig:ECG}.

To mitigate class imbalance, we apply a sliding-window augmentation strategy. With a default window size of 1,000 and stride of 500, normal samples are split with 50\% overlap. For AF, other rhythm, and noise, strides are reduced by approximately 1/6, 1/2, and 1/20 respectively, yielding more overlapping fragments and improved sample density. A total of 150,000 samples are randomly selected for training.

\subsection{Experimental Results}

During NAS, we keep the outer evolutionary process and the fine-tuning of the neural predictor unchanged. However, we intentionally reduce the ECG training data used in the inner loop that computes validation accuracy. Four data volume ratios are tested: 25\%, 50\%, 75\%, and 100\%. The validation set is fixed across all settings, so performance differences are solely attributable to training data availability. Each setting is repeated 10 times with different seeds, and results are reported as mean ± standard deviation of macro F1 score.

As shown in Fig.~\ref{fig:cut_data}, reducing the training data in the inner loop has limited impact on NAS performance under a budget of 50 evaluations. In line with prior work~\cite{Less_Is_More,Data_Pruning,Subset_Selection}, even extreme data reduction (e.g., 20\%) incurs less than 3\% F1 degradation. Surprisingly, subsets like 50\% sometimes yield architectures that outperform the full dataset. 
These findings collectively support the ``performance ranking assumption'', which posits that low-fidelity~\cite{Multi-Fidelity} evaluation using data subsets is often sufficient to preserve the relative ranking of architectures. This assumption is particularly advantageous in scenarios where surrogate models rely on ordinal information rather than precise fitness values.

We report end-to-end NAS wall-clock time in GPU-days on a single NVIDIA GeForce RTX 4090 (24 GB) to reflect the practical cost users would face. Unless otherwise specified, all runs in this section use a maximum evaluation budget of 50 architectures. When obtaining the fitness (F1) of an architecture, we train for 50 epochs. Table~\ref{tab:runtime} summarizes DCL-ENAS runtime across four training-set ratios (25\%, 50\%, 75\%, and 100\%). Under identical settings, NPENAS is typically 0.5--1.5 hours faster than DCL-ENAS, and Random search is 0.5--2.0 hours faster. Despite the small differences in wall-clock time, DCL-ENAS consistently discovers the best architectures on our ECG task, with macro-F1 $\geq 0.55$(Fig.~\ref{fig:cut_data}). 

\begin{table}[t]
\centering
\caption{
End-to-end runtime (GPU-days on a single RTX~4090). All methods use an evaluation budget of 50.}
\label{tab:runtime}
\resizebox{0.5\linewidth}{!}{
\begin{tabular}{|l|c|c|c|}
\hline
{Training-set ratio} & {DCL-ENAS} & {NPENAS} & {Random} \\
\hline
25\%  & 1.625 & 1.583 & 1.604 \\ \hline
50\%  & 2.500 & 2.450 & 2.417 \\  \hline
75\%  & 5.000 & 4.967 & 4.958 \\   \hline
100\% & 7.833 & 7.771 & 7.758 \\    
\hline
\end{tabular}}
\end{table}

\subsection{Final Architecture Visualization}

The final architectures searched on ECG data using DCL-ENAS are visualized in Fig.~\ref{fig:normal_cell} (Normal Cell) and Fig.~\ref{fig:reduce_cell} (Reduction Cell), tailored specifically for time-series feature extraction in one-dimensional biomedical signals.

\section{Conclusion}
\label{sec:conclusion}
This article proposes a novel predictor-assisted evolutionary neural architecture search method, DCL-ENAS, which enhances the performance of the predictor model through Dual Contrastive Learning. This proposed method addresses the limitations of traditional ENAS methods, which usually require large amounts of computational resources due to the necessity of training each architecture from scratch. In the contrastive pretraining stage of DCL-ENAS, we reexamine the neural architecture from the perspective of information flow and perform representation learning based on the similarity between different semantic transformations of the same architecture in recognition batches. Specifically, the hard encoder is used to generate understandable information flow vector representations and group similar architectures within the search space. Afterwards, as part of the predictor model, the soft encoder mimics the grouping knowledge of the information flow vectors from the hard encoder. This pre-training phase takes advantage of contrastive self-supervised learning, eliminating the need for any labeled data on architectural performance and greatly reducing computational costs. In the contrastive fine-tuning stage, the predictor model is fine-tuned to rank different architectures. It effectively extends the training dataset to \( n(n-1) \) pairs, significantly alleviating the reliability issues caused by the limited { compute budget} available in traditional predictor-assisted evolutionary neural architecture search methods. We have also developed neural network information flow crossover operators, which combine path information between parent individuals, enhance population diversity, and guide the evolution towards more promising solutions. DCL-ENAS achieves good performance on two different NASBench search spaces. Experiments show that the proposed paradigm of self-supervised pre-training and contrastive fine-tuning can effectively improve the reliability of the predictor model. 
DCL-ENAS secures first place on both NASBench tracks, raising validation accuracy
by 0.05\%--0.39\% with respect to the strongest baselines (average rank = 1.00).
Under a 50-evaluation budget for ECG arrhythmia classification, it delivers a
2.5-percentage-point accuracy gain over a non-NAS baseline (Random), underscoring the robustness of the
two-stage contrastive predictor in noisy and highly imbalanced biomedical time-series settings.

Despite the strong performance of DCL-ENAS across multiple benchmark tasks, several limitations remain. First, the contrastive pretraining of the predictor’s soft encoder introduces a non-negligible computational overhead, particularly during the self-supervised training phase used to initialize its parameters. As the architecture search space grows exponentially, this overhead may become a bottleneck that hinders the scalability of the method. Moreover, this work is primarily restricted to cell-based search spaces, and the generalization capability of DCL-ENAS to other domains—such as large language model (LLM) architecture optimization—has not yet been empirically validated. Future research should aim to enhance the efficiency of contrastive pretraining and extend the ENAS framework to a broader range of tasks and search spaces, with particular emphasis on its potential for optimizing LLM architectures.{ In particular, given that our framework is compatible with multi-objective NAS (e.g., using Pareto-based selection), future work will jointly optimize accuracy, parameter count, and inference FLOPs under the same compute budget constraint.}

% \section*{References}
% \bibliographystyle{elsarticle}

\bibliography{DCL-ENAS_main}

\begin{thebibliography}{10}

\bibitem{enas_survey}
Yuqiao Liu, Yanan Sun, Bing Xue, Mengjie Zhang, Gary~G. Yen, and Kay~Chen Tan.
\newblock A survey on evolutionary neural architecture search.
\newblock {\em IEEE Transactions on Neural Networks and Learning Systems}, 34(2):550--570, 2023.

\bibitem{nas_survey}
Thomas Elsken, Jan~Hendrik Metzen, and Frank Hutter.
\newblock Neural architecture search: A survey.
\newblock {\em The Journal of Machine Learning Research}, 20(1):1997--2017, 2019.

\bibitem{three_SAENAS}
Yu~Xue and Anjing Zhu.
\newblock An effective surrogate-assisted rank method for evolutionary neural architecture search.
\newblock {\em Applied Soft Computing}, 167:112392, 2024.

\bibitem{ASOC_similar}
Yu~Xue, Jiajie Zha, Mohamed Wahib, Tinghui Ouyang, and Xiao Wang.
\newblock Neural architecture search via similarity adaptive guidance.
\newblock {\em Applied Soft Computing}, 162:111821, 2024.

\bibitem{NASBench-101}
Chris Ying, Aaron Klein, Eric Christiansen, Esteban Real, Kevin Murphy, and Frank Hutter.
\newblock {NAS}-bench-101: Towards reproducible neural architecture search.
\newblock In Kamalika Chaudhuri and Ruslan Salakhutdinov, editors, {\em Proceedings of the 36th International Conference on Machine Learning}, volume~97 of {\em Proceedings of Machine Learning Research}, pages 7105--7114. PMLR, 09--15 Jun 2019.

\bibitem{zhong2020blockqnn}
Zhao Zhong, Zichen Yang, Boyang Deng, Junjie Yan, Wei Wu, Jing Shao, and Cheng-Lin Liu.
\newblock Blockqnn: Efficient block-wise neural network architecture generation.
\newblock {\em IEEE transactions on pattern analysis and machine intelligence}, 43(7):2314--2328, 2020.

\bibitem{ASOC_DARTS_GD}
Zicheng Cai, Lei Chen, Shaoda Zeng, Yutao Lai, and Hai lin Liu.
\newblock {EST-NAS}: An evolutionary strategy with gradient descent for neural architecture search.
\newblock {\em Applied Soft Computing}, 146:110624, 2023.

\bibitem{real2017large}
Esteban Real, Sherry Moore, Andrew Selle, Saurabh Saxena, Yutaka~Leon Suematsu, Jie Tan, Quoc~V Le, and Alexey Kurakin.
\newblock Large-scale evolution of image classifiers.
\newblock In {\em International conference on machine learning}, pages 2902--2911. PMLR, 2017.

\bibitem{guo2023latency}
Bicheng Guo, Lilin Xu, Tao Chen, Peng Ye, Shibo He, Haoyu Liu, and Jiming Chen.
\newblock Latency-aware neural architecture performance predictor with query-to-tier technique.
\newblock {\em IEEE Transactions on Circuits and Systems for Video Technology}, 34(7):5868--5883, 2024.

\bibitem{pan2021neural}
Chao Pan and Xin Yao.
\newblock Neural architecture search based on evolutionary algorithms with fitness approximation.
\newblock In {\em 2021 International Joint Conference on Neural Networks (IJCNN)}, pages 1--8. IEEE, 2021.

\bibitem{ASOC_GNN}
Yang Liu and Jing Liu.
\newblock A surrogate evolutionary neural architecture search algorithm for graph neural networks.
\newblock {\em Applied Soft Computing}, 144:110485, 2023.

\bibitem{MLP-GNAS}
Sahil Verma, Prabhat Kumar, and Jyoti~Prakash Singh.
\newblock {MLP-GNAS}: Meta-learning-based predictor-assisted genetic neural architecture search system.
\newblock {\em Applied Soft Computing}, 169:112527, 2025.

\bibitem{resource-constrained_NAS_pre}
M.~Lupión, N.C. Cruz, E.M. Ortigosa, and P.M. Ortigosa.
\newblock A holistic approach for resource-constrained neural network architecture search.
\newblock {\em Applied Soft Computing}, 172:112832, 2025.

\bibitem{xie2023architecture}
Xiangning Xie, Yanan Sun, Yuqiao Liu, Mengjie Zhang, and Kay~Chen Tan.
\newblock Architecture augmentation for performance predictor via graph isomorphism.
\newblock {\em IEEE Transactions on Cybernetics}, 2023.

\bibitem{balestriero2023cookbook}
Randall Balestriero, Mark Ibrahim, Vlad Sobal, Ari Morcos, Shashank Shekhar, Tom Goldstein, Florian Bordes, Adrien Bardes, Gregoire Mialon, Yuandong Tian, et~al.
\newblock A cookbook of self-supervised learning.
\newblock {\em arXiv preprint arXiv:2304.12210}, 2023.

\bibitem{Self-Supervised_Generative_Contrastive}
Xiao Liu, Fanjin Zhang, Zhenyu Hou, Li~Mian, Zhaoyu Wang, Jing Zhang, and Jie Tang.
\newblock Self-supervised learning: Generative or contrastive.
\newblock {\em IEEE Transactions on Knowledge and Data Engineering}, 35(1):857--876, 2023.

\bibitem{Deep_clustering}
Mathilde Caron, Piotr Bojanowski, Armand Joulin, and Matthijs Douze.
\newblock Deep clustering for unsupervised learning of visual features.
\newblock In {\em Proceedings of the European conference on computer vision (ECCV)}, pages 132--149, 2018.

\bibitem{Canonical_Correlation_Analysis}
Xinghao Yang, Weifeng Liu, Wei Liu, and Dacheng Tao.
\newblock A survey on canonical correlation analysis.
\newblock {\em IEEE Transactions on Knowledge and Data Engineering}, 33(6):2349--2368, 2021.

\bibitem{oquab2024dinov}
Maxime Oquab, Timoth{\'e}e Darcet, Th{\'e}o Moutakanni, Huy~V. Vo, Marc Szafraniec, Vasil Khalidov, Pierre Fernandez, Daniel HAZIZA, Francisco Massa, Alaaeldin El-Nouby, Mido Assran, Nicolas Ballas, Wojciech Galuba, Russell Howes, Po-Yao Huang, Shang-Wen Li, Ishan Misra, Michael Rabbat, Vasu Sharma, Gabriel Synnaeve, Hu~Xu, Herve Jegou, Julien Mairal, Patrick Labatut, Armand Joulin, and Piotr Bojanowski.
\newblock {DINO}v2: Learning robust visual features without supervision.
\newblock {\em Transactions on Machine Learning Research}, 2024.

\bibitem{Arch2vec}
Shen Yan, Yu~Zheng, Wei Ao, Xiao Zeng, and Mi~Zhang.
\newblock Does unsupervised architecture representation learning help neural architecture search?
\newblock {\em Advances in neural information processing systems}, 33:12486--12498, 2020.

\bibitem{NASGEM}
Hsin-Pai Cheng, Tunhou Zhang, Yixing Zhang, Shiyu Li, Feng Liang, Feng Yan, Meng Li, Vikas Chandra, Hai Li, and Yiran Chen.
\newblock Nasgem: Neural architecture search via graph embedding method.
\newblock In {\em Proceedings of the AAAI Conference on Artificial Intelligence}, volume~35, pages 7090--7098, 2021.

\bibitem{self-supervised-nas}
Chen Wei, Yiping Tang, Chuang Niu~Chuang Niu, Haihong Hu, Yue Wang, and Jimin Liang.
\newblock Self-supervised representation learning for evolutionary neural architecture search.
\newblock {\em IEEE Computational Intelligence Magazine}, 16(3):33--49, 2021.

\bibitem{SAENAS-NE}
Liang Fan and Handing Wang.
\newblock Surrogate-assisted evolutionary neural architecture search with network embedding.
\newblock {\em Complex \& Intelligent Systems}, 9:3313--3331, 2022.

\bibitem{bench201}
Xuanyi Dong and Yi~Yang.
\newblock {NAS}-bench-201: Extending the scope of reproducible neural architecture search.
\newblock In {\em International Conference on Learning Representations (ICLR)}, 2020.

\bibitem{Parallel_NAS_DAG}
Haitao Yang, Zhaowei Liu, Dong Yang, and Lihong Wang.
\newblock Parallel graph neural architecture search optimization with incomplete features.
\newblock {\em Applied Soft Computing}, 176:113068, 2025.

\bibitem{GIN}
Keyulu Xu, Weihua Hu, Jure Leskovec, and Stefanie Jegelka.
\newblock How powerful are graph neural networks?
\newblock In {\em International Conference on Learning Representations}, 2019.

\bibitem{transformers}
Ashish Vaswani, Noam Shazeer, Niki Parmar, Jakob Uszkoreit, Llion Jones, Aidan~N Gomez, \L~ukasz Kaiser, and Illia Polosukhin.
\newblock Attention is all you need.
\newblock In I.~Guyon, U.~Von Luxburg, S.~Bengio, H.~Wallach, R.~Fergus, S.~Vishwanathan, and R.~Garnett, editors, {\em Advances in Neural Information Processing Systems}, volume~30. Curran Associates, Inc., 2017.

\bibitem{Npenas}
Chen Wei, Chuang Niu, Yiping Tang, Yue Wang, Haihong Hu, and Jimin Liang.
\newblock {NPENAS: Neural Predictor Guided Evolution for Neural Architecture Search}.
\newblock {\em IEEE Transactions on Neural Networks and Learning Systems}, 34(11):8441--8455, 2023.

\bibitem{SSNENAS}
Chen Wei, Yiping Tang, Chuang~Niu Chuang~Niu, Haihong Hu, Yue Wang, and Jimin Liang.
\newblock Self-supervised representation learning for evolutionary neural architecture search.
\newblock {\em IEEE Computational Intelligence Magazine}, 16(3):33--49, 2021.

\bibitem{GCN-predictor}
Wei Wen, Hanxiao Liu, Yiran Chen, Hai Li, Gabriel Bender, and Pieter-Jan Kindermans.
\newblock Neural predictor for neural architecture search.
\newblock In {\em Computer Vision – ECCV 2020: 16th European Conference, Glasgow, UK, August 23–28, 2020, Proceedings, Part XXIX}, page 660–676, Berlin, Heidelberg, 2020. Springer-Verlag.

\bibitem{regularized}
Esteban Real, Alok Aggarwal, Yanping Huang, and Quoc~V Le.
\newblock Regularized evolution for image classifier architecture search.
\newblock In {\em Proceedings of the aaai conference on artificial intelligence}, volume~33, pages 4780--4789, 2019.

\bibitem{white2021bananas}
Colin White, Willie Neiswanger, and Yash Savani.
\newblock {BANANAS}: Bayesian optimization with neural architectures for neural architecture search.
\newblock {\em Proceedings of the AAAI Conference on Artificial Intelligence}, 35(12):10293--10301, May 2021.

\bibitem{BOHAMIANN}
Jost~Tobias Springenberg, Aaron Klein, Stefan Falkner, and Frank Hutter.
\newblock Bayesian optimization with robust bayesian neural networks.
\newblock In {\em Proceedings of the 30th International Conference on Neural Information Processing Systems}, NIPS'16, page 4141–4149, Red Hook, NY, USA, 2016. Curran Associates Inc.

\bibitem{BONAS}
Han Shi, Renjie Pi, Hang Xu, Zhenguo Li, James~T. Kwok, and Tong Zhang.
\newblock Bridging the gap between sample-based and one-shot neural architecture search with bonas.
\newblock In {\em Proceedings of the 34th International Conference on Neural Information Processing Systems}, NIPS '20, Red Hook, NY, USA, 2020. Curran Associates Inc.

\bibitem{DNGO}
Jasper Snoek, Oren Rippel, Kevin Swersky, Ryan Kiros, Nadathur Satish, Narayanan Sundaram, Md. Mostofa~Ali Patwary, Prabhat Prabhat, and Ryan~P. Adams.
\newblock Scalable bayesian optimization using deep neural networks.
\newblock In {\em Proceedings of the 32nd International Conference on International Conference on Machine Learning - Volume 37}, ICML'15, page 2171–2180. JMLR.org, 2015.

\bibitem{local-search}
Colin White, Sam Nolen, and Yash Savani.
\newblock Exploring the loss landscape in neural architecture search.
\newblock In Cassio de~Campos and Marloes~H. Maathuis, editors, {\em Proceedings of the Thirty-Seventh Conference on Uncertainty in Artificial Intelligence}, volume 161 of {\em Proceedings of Machine Learning Research}, pages 654--664. PMLR, 27--30 Jul 2021.

\bibitem{CAP}
Han Ji, Yuqi Feng, and Yanan Sun.
\newblock Cap: A context-aware neural predictor for nas.
\newblock In Kate Larson, editor, {\em Proceedings of the Thirty-Third International Joint Conference on Artificial Intelligence, {IJCAI-24}}, pages 4219--4227. International Joint Conferences on Artificial Intelligence Organization, 8 2024.
\newblock Main Track.

\bibitem{Wilcoxon}
Joaquín Derrac, Salvador García, Daniel Molina, and Francisco Herrera.
\newblock A practical tutorial on the use of nonparametric statistical tests as a methodology for comparing evolutionary and swarm intelligence algorithms.
\newblock {\em Swarm and Evolutionary Computation}, 1(1):3--18, 2011.

\bibitem{1_D_DARTS}
Liam Li, Mikhail Khodak, Nina Balcan, and Ameet Talwalkar.
\newblock Geometry-aware gradient algorithms for neural architecture search.
\newblock In {\em International Conference on Learning Representations}, 2021.

\bibitem{ECG}
Gari~D Clifford, Chengyu Liu, Benjamin Moody, Li-wei~H. Lehman, Ikaro Silva, Qiao Li, A~E Johnson, and Roger~G. Mark.
\newblock Af classification from a short single lead ecg recording: The physionet/computing in cardiology challenge 2017.
\newblock In {\em 2017 Computing in Cardiology (CinC)}, pages 1--4, 2017.

\bibitem{Less_Is_More}
Brian Moser, Federico Raue, J\"orn Hees, and Andreas Dengel.
\newblock Less is more: Proxy datasets in nas approaches.
\newblock In {\em Proceedings of the IEEE/CVF Conference on Computer Vision and Pattern Recognition (CVPR) Workshops}, pages 1953--1961, June 2022.

\bibitem{Data_Pruning}
Chongjun Tu, Peng Ye, Weihao Lin, Hancheng Ye, Chong Yu, Tao Chen, Baopu Li, and Wanli Ouyang.
\newblock Efficient architecture search via bi-level data pruning.
\newblock {\em IEEE Trans. Cir. and Sys. for Video Technol.}, 35(2):1265–1275, February 2025.

\bibitem{Subset_Selection}
Vishak~Prasad C, Colin White, Sibasis Nayak, Paarth Jain, Aziz Shameem, Prateek Garg, and Ganesh Ramakrishnan.
\newblock Speeding up nas with adaptive subset selection.
\newblock In Katharina Eggensperger, Roman Garnett, Joaquin Vanschoren, Marius Lindauer, and Jacob~R. Gardner, editors, {\em Proceedings of the Third International Conference on Automated Machine Learning}, volume 256 of {\em Proceedings of Machine Learning Research}, pages 3/1--23. PMLR, 09--12 Sep 2024.

\bibitem{Multi-Fidelity}
Ke~Li and Fan Li.
\newblock Multi-fidelity methods for optimization: A survey, 2024.

\end{thebibliography}

\end{document}